\renewcommand{\citeyearpar}{\citep} 
\pgfplotsset{compat=1.13}
\newif\ifcomments\commentstrue
\newif\ifunabridged\unabridgedtrue
\newcommand{\Ycal}{\mathcal{Y}}
\newcommand{\Ycalcs}{\Ycal'}
\newcommand{\Ycalos}{\Ycal}
\newcommand{\Ycalps}{\hat{\Ycal}}
\newcommand{\Ycs}{Y'}
\newcommand{\ycs}{y'}
\newcommand{\Ytildecs}{\tilde{Y}'}
\newcommand{\ytildecs}{\tilde{y}'}
\newcommand{\Yos}{Y}
\newcommand{\yos}{y}
\newcommand{\Yps}{\hat{Y}}
\newcommand{\yps}{\hat{y}}
\newcommand{\binset}{\{0, 1\}}
\newcommand{\dtv}{d_{\mathrm{tv}}}
\newcommand{\dem}{d_{\mathrm{em}}}
\newcommand{\rhostar}{\rho^*_{\!\ell}}
\newcommand{\E}{\mathbb{E}}
	\newcommand{\todo}[1]{{\color{red} TODO: #1}}
	\newcommand{\mct}[1]{{\color{red} MCT: #1}}
	\newcommand{\sam}[1]{{\color{blue} Sam: #1}}
	\newcommand{\todo}[1]{}
	\newcommand{\mct}[1]{}
	\newcommand{\sam}[1]{}
\newtheorem{definition}{Definition}
\newtheorem{worldview}{Worldview}
\newtheorem{theorem}{Theorem}
\newtheorem{lemma}[theorem]{Lemma}
\title{Avoiding Disparity Amplification under Different Worldviews}
\author{Samuel Yeom}
\affiliation{\institution{Carnegie Mellon University}}
\email{syeom@cs.cmu.edu}
\author{Michael Carl Tschantz}
\affiliation{\institution{International Computer Science Institute}}
\email{mct@icsi.berkeley.edu}
\begin{abstract}
	We mathematically compare four competing definitions of group-level nondiscrimination: \emph{demographic parity}, \emph{equalized odds}, \emph{predictive parity}, and \emph{calibration}.
	Using the theoretical framework of Friedler et al., we study the properties of each definition under various \emph{worldviews}, which are assumptions about how, if at all, the observed data is biased.
	We argue that different worldviews call for different definitions of fairness, and we specify the worldviews that, when combined with the desire to avoid a criterion for discrimination that we call \emph{disparity amplification}, motivate demographic parity and equalized odds.
	We also argue that predictive parity and calibration are insufficient for avoiding disparity amplification because predictive parity allows an arbitrarily large inter-group disparity and calibration is not robust to post-processing.
	Finally, we define a worldview that is more realistic than the previously considered ones, and we introduce a new notion of fairness that corresponds to this worldview.
\end{abstract}
\begin{document}
	\maketitle
\section{Introduction} \label{sec:intro}

Researchers in the field of fair machine learning have proposed numerous tests for fairness, which focus on some quantitative aspect of a model that can be operationalized and checked using empirical, statistical, or program analytic methods.
These tests abstract away more subtle issues that are difficult to operationalize or too contentious to decide algorithmically, such as which groups or attributes should be protected and which cases should be treated as exceptions to general rules.
Our work sheds light on some of the possible assumptions behind and motivations for four common empirical tests that check for discrimination against groups.

The simplest of these tests, \emph{demographic parity}, checks whether the model gives the favorable outcome to two given groups of people at equal rates.
This test is an abstraction of the legal notion of \emph{disparate impact}, or \emph{indirect discrimination}, which in certain circumstances requires that some approximation of demographic parity hold.
Like disparate impact, demographic parity does not depend upon the intentions of the modeler, and it can flag a model that does not directly use the protected attribute if it instead uses another attribute that is correlated with the protected one.
However, demographic parity abstracts away disparate impact's exceptions for cases where there is sufficient justification for a disparity in outcomes, such as a \emph{business necessity}~\citep[e.g.,][]{grover1995business, barocas2016big}.
By completely abstracting away such exceptions, demographic parity may lead to models so inaccurate as to become useless, such as when predicting physical strength while requiring demographic parity on gender.

This impossibility of accuracy motivates moving away from demographic parity to tests that take the ground truth into account, allowing a degree of accuracy.
One such test, called \emph{equalized odds} by Hardt et al.~\citeyearpar{hardt2016equality}, requires equal false positive and false negative rates for each protected group.
Two other commonly used tests are \emph{predictive parity} and \emph{calibration}~\citep[e.g.,][]{chouldechova2017fair}, which impose conditions on the predictive values of the model for the protected groups.
Like demographic parity, all of these tests can be seen as abstractions of disparate impact in that they too examine disparities in outcomes, not how or why they were reached.
In contexts where accuracy can be considered a business necessity, these tests arguably provide a more refined abstraction of disparate impact than demographic parity does.

However, disagreement exists over which of these tests is the most appropriate, with some favoring calibration~\cite{compasrebuttal} and some favoring equalized odds~\cite{angwin2016propublica, hardt2016equality}.
It has been argued that adopting the calibration or equalized odds test corresponds to adopting the perspective of either the person using the classification or the person being classified, respectively~\cite{angwin2016propublica,narayanan2018translation}.
We provide a different lens on this disagreement and study the conditions under which each test allows the amplification of pre-existing disparities.

In some cases, the ``ground truth'' may be tainted by past discrimination, and consulting it will help perpetuate the discrimination.
In this work, we handle this issue by adopting the framework of Friedler et al.~\citeyearpar{friedler2016impossibility}, who make a distinction between the observed ground truth and the \emph{construct}, which is the attribute that is truly relevant for prediction.
For example, in the context of bail decisions, the construct could be whether a defendant commits a crime while out on bail, and the observed ground truth could be whether the defendant is rearrested for a crime.
Because the construct is usually unobservable, Friedler et al.\ introduce and analyze two assumptions, or \emph{worldviews}, about the construct:
Under the We're All Equal (WAE) worldview, there is no association between the construct and the protected attribute, and under the WYSIWYG worldview, the observations accurately reflect the construct.

By using the construct, we specify a natural criterion for discrimination.
This criterion, \emph{disparity amplification}, deals with the disparity in positive classification rates, which is a widely accepted measure of discriminatory effect in both law~\citep{eeoc1978} and computer science~\citep{calders2009building, calders2010three, kamishima2012fairness, zemel2013learning, feldman2015certifying, zafar2017fairness-aistats}.
It stipulates that a disparity in the output of the model is justified by a commensurate disparity in the construct, thereby allowing accurate models even when the base rates are different for different protected groups, as equalized odds, predictive parity, and calibration do. 
In addition, because it uses the construct, it does not depend upon the possibly biased ground truth.
Using the often unobservable construct can make testing for disparity amplification impossible; we argue that its value instead comes from its ability to organize the space of empirical tests.

In particular, one of our main contributions is our argument that the WAE and WYSIWYG worldviews, when combined with the desire to avoid disparity amplification, motivate demographic parity and equalized odds, respectively.
We thus shed light on why people may disagree about which empirical test of discrimination to apply in a particular setting:
Even if they agree on the need to avoid disparity amplification, they may disagree about the correct worldview to apply in that setting.
We also show that, regardless of the worldview and the base rates of the observed ground truth, predictive parity does not impose any restrictions on the extent to which a model amplifies disparity.
Calibration is more restrictive in this regard, but the common post-processing method of thresholding can amplify disparity to an arbitrary extent.
Since equalized odds is incompatible with predictive parity or calibration
~\citep{darlington1971another, chouldechova2017fair, kleinberg2017inherent}, this is an argument for the use of equalized odds instead of predictive parity or calibration.
Furthermore, we compare our approach to that of Zafar et al.~\citeyearpar{zafar2017fairness-www} in their work on \emph{disparate mistreatment}, or disparate misclassification rates, showing that the definition of disparity amplification can be modified to apply in their setting.

Although the WAE and WYSIWYG worldviews are useful for theoretical analysis, they are unlikely to be true in practice.
To remedy this issue, we introduce a family of hybrid worldviews that is parametrized by a measure of how biased the observed data is against a protected group of people.
This allows us to model many real-world situations by simply adjusting the parameter.
We then create a parametrized test for discrimination that corresponds to the new family of worldviews, showing how one can apply the analysis in our paper to more realistic scenarios.

Our most fundamental contribution is introducing a framework in which to motivate empirical tests in terms of construct-based criteria of discrimination and worldviews.
Disparity amplification is not the only relevant notion of discrimination, nor is it suitable in every context.
Indeed, there are many other aspects of discrimination that we do not address in this paper, such as intentional discrimination~\citep[\S II-A]{barocas2016big}, individual fairness~\citep{dwork2012fairness}, proxy discrimination~\citep{datta2017proxy}, delayed outcomes~\citep{liu2018delayed}, and affirmative action~\citep{kannan2019downstream}.
Future work may use our approach to tease out the assumptions implicit in these tests.

We view the discussed tests and disparity amplification as diagnostics that can lead to further investigations of potentially discriminatory behavior in a model.
As a result, we do not provide an algorithm for ensuring that a model does not have disparity amplification since, in our view, doing so would be treating the symptom rather than the cause.
Such algorithms can eliminate one aspect of discrimination, but may in the process create a model that is obviously discriminatory from another angle.
When a model does not satisfy a notion of nondiscrimination, it should be a starting point for investigation as to why.
While it could be that the learning algorithm is corrupt, it could also be due to a mismatch between the construct and the observed data, or a need for better features. 
No one test or criterion can ensure fairness~\citep{green2018myth}, and no single algorithm will be appropriate in all cases.

\section{Related Work}
Our work is most similar in structure to that of Heidari et al.~\citeyearpar{heidari2019moral}, who propose a unifying framework that reformulates some existing fairness definitions through the lens of equality of opportunity from political philosophy~\citep{rawls1971theory, roemer2002equality}.
They then propose a new fairness definition that is inspired by this lens.
Although we also present a unifying framework, our unification is through the lens of constructs and worldviews.

Friedler et al.~\citeyearpar{friedler2016impossibility} introduced the concept of the construct in fair machine learning.
Although they also use the construct in their definition of nondiscrimination, their definition uses the Gromov--Wasserstein distance and as a result is more difficult to compute and reason about.
One benefit of their approach is that it enables their treatment of fairness at both the individual level and the group level.
By contrast, we consider group nondiscrimination only, and this allows us to draw a parallel between the worldviews and the existing empirical tests of discrimination.

Barocas and Selbst~\citeyearpar{barocas2016big} discuss in detail the potential legal issues with discrimination in machine learning.
One widely consulted legal standard for detecting disparate impact is the \emph{four-fifths rule}~\citep{eeoc1978}.
The four-fifths rule is a guideline that checks whether the ratio of the rates of favorable outcomes for different demographic groups is at least four-fifths.
This guideline can be considered a relaxation of demographic parity, which would instead require that the ratio of the positive classification rates be exactly one.

The four-fifths rule has inspired the work of Feldman et al.~\citeyearpar{feldman2015certifying} and Zafar et al.~\citeyearpar{zafar2017fairness-aistats}, who deal with a generalization of the four-fifths rule, called the \emph{$p$\% rule}, in their efforts to remove disparate impact.
On the other hand, many others~\citep{calders2009building, calders2010three, kamishima2012fairness, zemel2013learning} consider the difference, rather than the ratio, of the positive classification rates.
Our discrimination criterion is a generalization of this difference-based measure, but it differs from the others in that it uses the construct rather than the observed data.

Other works in the field of fair machine learning deal with aspects of discrimination that are not well described by positive classification rates.
Hardt et al.~\citeyearpar{hardt2016equality} characterize nondiscrimination through \emph{equalized odds}, which requires that two measures of misclassification, false positive and false negative rates, be equal for all protected groups.
\emph{Calibration}, Chouldechova~\citeyearpar{chouldechova2017fair} points out, is widely accepted in the ``educational and psychological testing and assessment literature''.
In another work, Friedler et al.~\citeyearpar{friedler2019comparative} create a benchmark for empirically evaluating the consequences of imposing these and other definitions of fairness, finding that many, but not all, definitions lead to similar model behavior.

Dwork et al.~\citeyearpar{dwork2012fairness} formally define \emph{individual fairness} and give examples of cases where models are blatantly unfair at the individual level even though they satisfy demographic parity.
Although individual fairness is sometimes considered to be in conflict with group-based notions of fairness, Binns~\citeyearpar{binns2020apparent} argues otherwise, instead pointing to the difference in worldviews as the truly important factor.
He then lists demographic parity and calibration as corresponding to the WAE and WYSIWYG worldviews, respectively.
For the WYSIWYG worldview, he reasons that if calibration is satisfied, no applicant would receive a less favorable outcome than a less qualified applicant, assuming that the calibrated scores accurately describe the degree to which the applicant is qualified.
By contrast, in this paper we prove that equalized odds, but not calibration, is an effective way to avoid disparity amplification under the WYSIWYG worldview.

As mentioned previously, discriminatory effects can be justified if there is a sufficient reason.
For prediction tasks, it is natural to think of accuracy as a sufficient justification.
Zafar et al.~\citeyearpar{zafar2017fairness-aistats} handle this by solving an optimization problem to maximize fairness subject to some accuracy constraints.
This reflects the idea that a classifier is justified in sacrificing fairness for accuracy.
To a lesser extent, equalized odds, predictive parity, and calibration can also be thought of as motivated by the dual desires for accuracy and fairness.
Our approach to justification is also motivated by these desires, but we use the construct and say that a classifier is justified in predicting the construct correctly.

\section{Notation} \label{sec:notation}
In the framework introduced by Friedler et al.~\citeyearpar{friedler2016impossibility}, there are three spaces that describe the target attribute of a prediction model.
The \emph{construct space} represents the value of the attribute that is truly relevant for the prediction task.
This value is usually unobservable, so prediction models in a supervised learning problem are instead trained with a related measurable label, whose values reside in the \emph{observed space}.
Finally, the \emph{prediction space} (called \emph{decision space} by Friedler et al.) describes the output of the model.
We will use $\Ycs$, $\Yos$, and $\Yps$ as the random variables representing values from the construct, observed, and prediction spaces, respectively.
(See Figure~\ref{fig:spaces}.)

In addition, we will use $Z$ to denote the protected attribute at hand, and we will assume that $Z \in \binset$.
For example, if $Z$ is gender, the values 0 and 1 could represent male and female, respectively.
Although the input features $X = (X_1, \ldots, X_n)$ are also critical for both the training and the prediction of the model, they are rarely used in this paper.

\begin{description}
	\item[Example 1.] Some jurisdictions have started to use machine learning models to predict how much risk a criminal defendant poses~\citep{liptak2017sent}.
	Judges are then allowed to consider the risk score as one of many factors when making bail or sentencing decisions~\citep{loomis2016}.
	Using the three-space framework of Friedler et al.~\citeyearpar{friedler2016impossibility}, we can represent the risk score output by the model as $\Yps$.
	The model would be trained with the observation $\Yos$, which in this case may be recorded data about past criminal defendants and their failures to appear in court (bail) or recidivism (sentencing).
	These models would also be trained with features $X$ from the input space, such as age and criminal history.
	
	For sentencing decisions, presumably we want to know whether the defendant will commit another crime in the future, regardless of whether the defendant will be caught committing the crime.
	Therefore, we argue that the recorded recidivism rate $\Yos$ is merely a proxy for the actual reoffense rate $\Ycs$, which is the relevant attribute for the prediction task.
	There is evidence that Black Americans are arrested at a higher rate than White Americans for the same crime~\citep{mueller2018using}, so it is reasonable to suspect that $\Yos$ is a racially biased proxy for $\Ycs$.
	\item[Example 2.] Universities want the students that they admit to the university to be successful in the university ($\Ycs$).
	Because \emph{success} is a vague term that encompasses many factors, a model that predicts success in university would instead be trained with a more concrete measure, such as graduating within six years ($\Yos$).
	This model may take inputs such as a student's high-school grades and standardized test scores ($X$), and will output a prediction of how likely the student is to graduate within six years ($\Yps$).
	Admissions officers can then use this prediction to guide their decision about whether to admit the student.
\end{description}

It is important to note that the models in the above examples do not make the final decision and that human judgments are a major part of the decision process.
However, we are concerned about the fairness of the model rather than that of the entire decision process.
Thus, we focus on $\Yps$, the output of the model, rather than the final decision made using it.

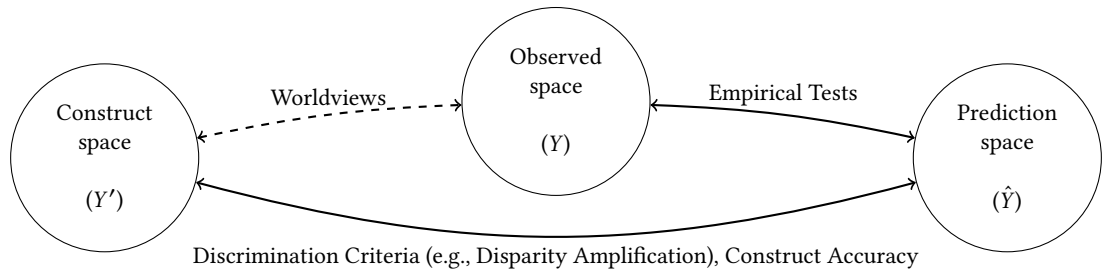
\begin{figure*}
	\begin{center}
		\begin{tikzpicture}
		\node[draw, ellipse, minimum height=2.5cm, minimum width=2.5cm, align=center] (cs) at (0,0) {Construct\\space\\[\baselineskip]($\Ycs$)};
		\node[draw, ellipse, minimum height=2.5cm, minimum width=2.5cm, align=center] (os) at (6,0.75) {Observed\\space\\[\baselineskip]($\Yos$)};
		\node[draw, ellipse, minimum height=2.5cm, minimum width=2.5cm, align=center] (ds) at (12,0) {Prediction\\space\\[\baselineskip]($\Yps$)};
		\draw[<->, dashed, thick] (cs) to[bend left=5] node[above] {Worldviews} (os);
		\draw[<->, thick] (os) to[bend left=5] node[above, align=center] {Empirical Tests} (ds);
		\draw[<->, thick] (cs) to[bend right=15] node[below] {Discrimination Criteria (e.g., Disparity Amplification), Construct Accuracy} (ds);
		\end{tikzpicture}
	\end{center}
	\caption{
		Three relevant spaces for prediction models.
		The space of input features $X = (X_1, \ldots, X_n)$ is not depicted here.
		The observed space and the prediction space are measurable, and the existing empirical tests (Definitions~\ref{def:demparity},~\ref{def:eqodds},~\ref{def:pp}) impose constraints on the relationship between the two spaces.
		On the other hand, the construct space is usually unobservable, so we must assume a particular worldview (e.g., Worldview~\ref{def:wae} or \ref{def:wysiwyg}) about how the construct space relates to the observed space, if at all.
		Then, we can define disparity amplification and construct accuracy, which relate the construct space to the prediction space.
	} \label{fig:spaces}
\end{figure*}

\section{Preliminary Definitions} \label{sec:prelim}
In this work, we use two notions of distance between two random variables that measure how different the random variables are.
When the random variables are categorical, we use the total variation distance.
\begin{definition}[Total Variation Distance] \label{def:dtv}
	Let $Y_0$ and $Y_1$ be categorical random variables with finite supports $\Ycal_0$ and $\Ycal_1$.
	Then, the \emph{total variation distance} between $Y_0$ and $Y_1$ is
	\[ \dtv(Y_0, Y_1) = \frac{1}{2} \sum_{y \in \Ycal_0 \cup \Ycal_1} \big|\Pr[Y_0{=}y] - \Pr[Y_1{=}y]\big|. \]
\end{definition}
In the special case where $Y_0, Y_1 \in \binset$, the total variation distance can also be expressed as $|\Pr[Y_0{=}1] - \Pr[Y_1{=}1]|$.

When the random variables are numerical, our notion of distance takes into account the magnitude of the difference in the numerical values.
The following definition assumes that the random variables are continuous, but a similar definition is applicable when they are discrete.
\begin{definition}[Earthmover Distance]
	Let $Y_0$ and $Y_1$ be continuous numerical random variables with probability density functions $p_0$ and $p_1$ defined over support $\Ycal$.
	Furthermore, let $\Gamma$ be the set of joint probability density functions $\gamma(u, v)$ such that $\int_{\Ycal} \gamma(u, v) \, dv = p_0(u)$ for all $u \in \Ycal$ and $\int_{\Ycal} \gamma(u, v) \, du = p_1(v)$ for all $v \in \Ycal$.
	Then, the \emph{earthmover distance} between $Y_0$ and $Y_1$ is
	\[ \dem(Y_0, Y_1) = \inf_{\gamma \in \Gamma} \int_{\Ycal} \int_{\Ycal} \gamma(u, v) \, d(u, v) \, du \, dv, \]
	where $d$ is a distance metric defined over $\Ycal$.
\end{definition}
The joint probability density function $\gamma$ has marginal distributions that correspond to $Y_0$ and $Y_1$.
Intuitively, if we use the graphs of the probability density functions $p_0$ and $p_1$ to represent mounds of sand, $\gamma$ corresponds to a transportation plan that dictates how much sand to transport in order to reshape the $p_0$ mound into the $p_1$ mound.
In particular, the value of $\gamma(u, v)$ is the amount of sand to be transported from $u$ to $v$.
The distance $d(u, v)$ can then be interpreted as the cost of transporting one unit of sand from $u$ to $v$, and the earthmover distance is simply the cost of the transportation plan $\gamma$ that incurs the least cost.

Now we define Lipschitz continuity.
\begin{definition} \label{def:lipschitz}
	Let $f: \Ycal \to \mathbb{R}$ be a function, and let $d$ be a distance metric defined over $\Ycal$.
	$f$ is \emph{$\rho$-Lipschitz continuous} if, for all $u, v \in \Ycal$,
	\begin{equation} \label{eqn:lipschitz}
	|f(u) - f(v)| \le \rho \cdot d(u, v).
	\end{equation}
\end{definition}

\subsection{Existing Empirical Tests of Discrimination}
Many fairness definitions for prediction models have been proposed previously, and here we restate four of them.
Because much of the prior work does not make the distinction between the construct space and the observed space, there is some ambiguity about whether $\Ycs$ or $\Yos$ is the appropriate variable to use these definitions.
Given that these works suggest that these definitions can be computed, we interpret them to be \emph{empirical tests} that can help verify whether a model is fair.
As a result, none of these definitions include the construct $\Ycs$.
In all four definitions, the probabilities are taken over random draws of data points from the data distribution, as well as any randomness used by the model.
\begin{definition}[Demographic Parity Test] \label{def:demparity}
	A model passes the \emph{demographic parity test} if, for all $\yps$,
	\[ \Pr[\Yps{=}\yps \mid Z{=}0] = \Pr[\Yps{=}\yps \mid Z{=}1]. \]
\end{definition}
\begin{definition}[Equalized Odds Test~\citep{hardt2016equality}] \label{def:eqodds}
	A model passes the \emph{equalized odds test} if, for all $\yos$ and $\yps$,
	\[ \Pr[\Yps{=}\yps \mid \Yos{=}\yos, Z{=}0] = \Pr[\Yps{=}\yps \mid \Yos{=}\yos, Z{=}1]. \]
\end{definition}
\begin{definition}[Predictive Parity Test~\citep{chouldechova2017fair}] \label{def:pp}
	A model passes the \emph{predictive parity test} if, for all $\yos$ and $\yps$,
	\[ \Pr[\Yos{=}\yos \mid \Yps{=}\yps, Z{=}0] = \Pr[\Yos{=}\yos \mid \Yps{=}\yps, Z{=}1]. \]
\end{definition}
Unlike the above three tests, the calibration test is only defined for binary observations, i.e., $\Yos \in \binset$.
\begin{definition}[Calibration Test~\citep{chouldechova2017fair}]
	\label{def:calib}
	A model with a binary $\Yos$ passes the \emph{calibration test} if, for all $\yps$ in the support of $\Yps$,
	\[ \Pr[\Yos{=}1 \mid \Yps{=}\yps, Z{=}0] = \Pr[\Yos{=}1 \mid \Yps{=}\yps, Z{=}1] = \yps. \]
\end{definition}

\subsection{Worldviews}
Our intuitive notion of discrimination involves the relationship between the construct space and the prediction space.
For example, consider the context of recidivism prediction described in Example~1.
Suppose that one group of people is much more likely to be arrested for the same crime than another group.
Then, the disparity in arrest rates can cause the recorded recidivism rate $\Yos$ to be biased, and a model trained using such $\Yos$ would likely learn to discriminate as a result.
If in fact the two groups have equal reoffense rates $\Ycs$, it would hardly be considered justified that one group tends to be given longer sentences as a result of the bias in $\Yos$.

However, because $\Ycs$ is typically unobservable, in practice we do not know whether $\Ycs$ is the same for both groups.
Therefore, to reason about discrimination using the construct space, we must make assumptions about the construct space.
Two such assumptions, or \emph{worldviews}, have previously been introduced by Friedler et al.~\citeyearpar{friedler2016impossibility} and are described below.
Our versions of these worldviews are simpler than the original because they are exact, whereas the original versions allow deviations by a parameter $\epsilon$.
\begin{worldview}[We're All Equal] \label{def:wae}
	Under the \emph{We're All Equal (WAE)} worldview, every group is identical with respect to the construct space. More formally, $\Ycs$ is independent of $Z$, i.e., $\Ycs \perp Z$.
\end{worldview}
\begin{worldview}[WYSIWYG] \label{def:wysiwyg}
	Under the \emph{What You See Is What You Get (WYSIWYG)} worldview, the observed space accurately reflects the construct space. More formally, $\Ycs = \Yos$.
\end{worldview}

\section{Construct Criteria} \label{sec:criteria}

We introduce two construct criteria for models.
By using the construct, these criteria must be combined with a worldview for application to a model.
Unlike the more readily applied empirical tests, construct criteria depend upon the attribute truly relevant to the classification task.

Here, we consider the case where $\Ycs$ and $\Yps$ are categorical (but not necessarily binary), and in Section~\ref{sec:general}
we generalize the definition to numerical $\Ycs$.

\subsection{Disparity Amplification} \label{sec:dispamp}
When $\Yps$ is binary, the size of a model's discriminatory effect is commonly measured by the difference in positive classification rates: $|\Pr[\Yps{=}1 \mid Z{=}0] - \Pr[\Yps{=}1 \mid Z{=}1]|$.
Output disparity generalizes this measure for the case of non-binary categorical $\Yps$.
\begin{definition}[Output Disparity] \label{def:outputdisp}
	Let the output $\Yps$ of a model be categorical.
	The \emph{output disparity} of the model is the quantity $\dtv(\Yps|Z{=}0, \Yps|Z{=}1)$.
\end{definition}
However, not all output disparities are bad in every context.
In particular, because we want the model to accurately reflect the construct, we allow an output disparity insofar as it can be explained by the inter-group disparity in $\Ycs$.
This happens when
\begin{equation} \label{eqn:no-dispamp}
\dtv(\Yps|Z{=}0, \Yps|Z{=}1) \leq \dtv(\Ycs|Z{=}0, \Ycs|Z{=}1).
\end{equation}
Since a model can have issues with discrimination that are not characterized by output disparity (see below), \eqref{eqn:no-dispamp} is not the conclusive definition of nondiscrimination.
Thus, we use the logical negation of \eqref{eqn:no-dispamp} as a criterion for one particular discrimination concern, which occurs when an output disparity is \emph{not} explained by $\Ycs$.
\begin{definition}[Disparity Amplification] \label{def:dispamp}
	Let $\Ycs$ and $\Yps$ be categorical.
	Then, a model exhibits \emph{disparity amplification} if
	\begin{equation} \label{eqn:dispamp}
	\dtv(\Yps|Z{=}0, \Yps|Z{=}1) > \dtv(\Ycs|Z{=}0, \Ycs|Z{=}1).
	\end{equation}
\end{definition}

\subsection{Construct Accuracy}
As mentioned in Section~\ref{sec:dispamp}, we want the output of the model to accurately reflect the value of $\Ycs$.
However, the simple accuracy measure $\Pr[\Ycs = \Yos]$ incentivizes the model to become more accurate on the larger protected group at the expense of becoming less accurate on the smaller protected group.
Therefore, we instead measure accuracy as the average of the accuracy on the two groups.
\begin{definition}[Construct Accuracy] \label{def:accuracy}
	The \emph{construct accuracy} of a model is
	\begin{equation} \label{eqn:accuracy}
	\textstyle \frac{1}{2} \big(\Pr[\Ycs{=}\Yps \mid Z{=}0] + \Pr[\Ycs{=}\Yps \mid Z{=}1]\big).
	\end{equation}
\end{definition}
\begin{definition}[Construct Optimality] \label{def:optimal}
	A model is \emph{construct optimal} if its construct accuracy is 1, i.e., its output $\Yps$ and the construct $\Ycs$ are always equal.
\end{definition}
Because the construct $\Ycs$ usually cannot be observed, construct accuracy usually cannot be measured or directly optimized for.
Even when it can measured, construct optimality would be rare since the quality of the features, data, or machine learning algorithm may preclude perfection.
As with disparity amplification, we introduce construct accuracy not to empirically measure it, but as a theoretical tool for analyzing discrimination.
In particular, note that equality holds in \eqref{eqn:no-dispamp} for every construct optimal model.
In other words, a construct optimal model displays the maximum amount of output disparity allowed by Definition~\ref{def:dispamp}.
On the other hand, if the output disparity is greater than the disparity in $\Ycs$, the model must be amplifying a disparity in a way that cannot be justified by the desire to achieve construct optimality.

The above definitions can be generalized to the setting where the range $\Ycalcs$ of the values that $\Ycs$ takes differs from the range $\Ycalps$ of $\Yps$.
If there exists a bijective mapping between $\Ycalcs$ and $\Ycalps$, we can use the mapping to characterize when a value from $\Ycalps$ accurately reflects a value from $\Ycalcs$.

\subsection{Limitations}

These criteria, separately or jointly, are neither necessary nor sufficient for fairness.
Technical criteria allow precision but elide the context-specific and social aspects of fairness~\cite{green2018myth}.

The criteria fail to be sufficient for fairness by not capturing forms of discrimination unrelated to output disparity.
For example, a model could have a higher misclassification rate for one group of people~\citep{zafar2017fairness-www}, which goes undetected by Definition~\ref{def:dispamp}.
(See Section~\ref{sec:misclassification} for discussion.)
Furthermore, by examining just a model's input/output behavior, the criteria cannot catch a model produced by an unacceptable process or performing unacceptable computations internally to reach its outputs.
For example, Datta et al.~\citeyearpar{datta2017proxy} show the impossibility of externally detecting whether a model internally reconstructs a sensitive attribute that it should not use.

We believe avoiding disparity amplification does better as a necessary condition for fairness, but limitations exist here as well.
For example, when correcting historical wrongs, it may be fair to amplify certain disparities that benefit an oppressed group.
Such cases also provide a counterexample to the necessity of construct accuracy.
In some cases, carefully selecting a historically informed construct can avoid violating our criteria while achieving a reparative goal.
However, some goals, such as achieving adequate representation for a group, cannot be expressed in terms of an individual-level construct.
Nevertheless, our criteria highlight when a model's behavior is suspicious enough to warrant an explanation and can serve as a basis for selecting between empirical tests.

\section{Using Criteria and Worldviews to Motivate Empirical Tests} \label{sec:motivate}

In this section, we use our construct criteria to analyze which worldviews motivate the existing empirical tests of discrimination.
If an empirical test does not guarantee the lack of disparity amplification, it may not be sufficient as an anti-discrimination measure as it effectively allows certain forms of discrimination.
On the other hand, if the test disallows a construct optimal model, the test may be too strict in a way that lowers the utility of the model.
Therefore, to argue that a worldview motivates an empirical test, we will prove the following two statements:
(a) Every model that passes the empirical test does not have disparity amplification, and
(b) every optimal model passes the empirical test.

We apply this reasoning to demographic parity (Definition~\ref{def:demparity}) and equalized odds (Definition~\ref{def:eqodds}), showing that the WAE and WYSIWYG worldviews, respectively, motivate these empirical tests.
More formally, we will prove statements (a) and (b) for every joint distribution of $\Ycs$, $\Yos$, $\Yps$, and $Z$ that is consistent with the worldview.
Table~\ref{tbl:summary} summarizes these results.

\begin{table}
	\caption{Summary of the results in Section~\ref{sec:motivate}.
	We say that a worldview motivates an empirical test if it precludes disparity amplification (Definition~\ref{def:dispamp}) but does not preclude a perfectly predictive model.
	The We're All Equal (WAE) worldview motivates the demographic parity test, and if the worldview does not hold, the demographic parity test tends to lower the utility of the model.
	The WYSIWYG worldview motivates the equalized odds test, and if the worldview does not hold, the equalized odds test allows models that have disparity amplification.
	Finally, regardless of the worldview, the predictive parity and calibration tests do not effectively prevent disparity amplification.
	Here, we assume that WAE and WYSIWYG do not hold simultaneously.
	}
	\label{tbl:summary}
	\begin{center}
	\begin{tabular}{@{}c|c|c@{}}
		& \makecell{We're All Equal \\ (Worldview~\ref{def:wae})}
		& \makecell{WYSIWYG \\ (Worldview~\ref{def:wysiwyg})} \\ \hline
		\makecell[t]{Demo.\ Parity \\ (Definition~\ref{def:demparity})}
		& \makecell[t]{\ding{52} \\ \footnotesize Theorem~\ref{thm:dispampwae}}
		& \makecell[t]{Necessarily suboptimal \\ \footnotesize Theorem~\ref{thm:notwae}} \\ \hline
		\makecell[t]{Equal.\ Odds \\ (Definition~\ref{def:eqodds})}
		& \makecell[t]{Amplification allowed \\ \footnotesize Theorem~\ref{thm:notwysiwyg}}
		& \makecell[t]{\ding{52} \\ \footnotesize Theorem~\ref{thm:dispampwysiwyg}} \\ \hline
		\makecell[t]{Predictive Parity \\ (Definition~\ref{def:pp})}
		& \multicolumn{2}{c}{\makecell[t]{Amplification allowed \\ \footnotesize Theorem~\ref{thm:pp}}} \\ \hline
		\makecell[t]{Calibration \\ (Definition~\ref{def:calib})}
		& \multicolumn{2}{c}{\makecell[t]{Not robust to post-processing \\ \footnotesize Theorem~\ref{thm:calib}}}
	\end{tabular}
	\end{center}
\end{table}

\subsection{Demographic Parity and WAE}
\begin{theorem} \label{thm:dispampwae}
	A model that passes the demographic parity test does not have disparity amplification under Definition~\ref{def:dispamp}. Moreover, if the WAE worldview holds, every construct optimal model satisfies demographic parity.
\end{theorem}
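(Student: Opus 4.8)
The plan is to prove both halves directly from the definitions; each collapses to essentially a one-line observation, so the work is mostly bookkeeping. For the first half, I would start from a model that passes the demographic parity test, so that $\Pr[\Yps{=}\yps \mid Z{=}0] = \Pr[\Yps{=}\yps \mid Z{=}1]$ for every $\yps$. Substituting into Definition~\ref{def:dtv} immediately gives $\dtv(\Yps|Z{=}0, \Yps|Z{=}1) = 0$. Since the total variation distance is a nonnegative quantity (it is $\tfrac12$ times a sum of absolute values), we get $\dtv(\Ycs|Z{=}0, \Ycs|Z{=}1) \geq 0 = \dtv(\Yps|Z{=}0, \Yps|Z{=}1)$, which is exactly Equation~\ref{eqn:no-dispamp}; hence its negation, Equation~\ref{eqn:dispamp}, fails and the model does not exhibit disparity amplification. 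Note that this half needs no worldview assumption at all.

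For the second half, I would assume the WAE worldview, i.e.\ $\Ycs \perp Z$, and take a construct optimal model, so that $\Yps$ and $\Ycs$ coincide (with probability one). Then for each $\yps$ and each $z \in \binset$ we have $\Pr[\Yps{=}\yps \mid Z{=}z] = \Pr[\Ycs{=}\yps \mid Z{=}z] = \Pr[\Ycs{=}\yps]$, where the first equality uses construct optimality and the second uses independence. Since the right-hand side does not depend on $z$, the two conditional distributions of $\Yps$ agree, which is precisely the demographic parity test.

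The only place that calls for a moment's care is the phrase ``always equal'' in Definition~\ref{def:optimal}: I would read it as equality with probability one, so that replacing the event $\{\Yps{=}\yps\}$ by $\{\Ycs{=}\yps\}$ inside a probability is legitimate (the two events differ on a null set), and I would also invoke the usual implicit assumption that $\Pr[Z{=}z] > 0$ for both values so that the conditional probabilities are well defined. Beyond that there is no genuine obstacle; in fact the content of the theorem is exactly that demographic parity sits flush against disparity amplification from the WAE side — it forbids \emph{all} output disparity, which is the most one can forbid without also ruling out the construct optimal model when $\Ycs \perp Z$.
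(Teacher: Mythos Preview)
Your proof is correct and follows essentially the same approach as the paper's: both halves are dispatched by noting that demographic parity forces the output disparity to zero (hence no amplification, since total variation distance is nonnegative), and that under WAE a construct optimal model inherits $\Yps \perp Z$ from $\Ycs \perp Z$. Your version spells out a bit more detail (the probability-one reading of optimality and the positivity of $\Pr[Z{=}z]$), but the argument is the same.
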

\begin{proof}
	By the definition of demographic parity, the left-hand side of \eqref{eqn:dispamp} is $\dtv(\Yps|Z{=}0, \Yps|Z{=}1) = 0$.
	Since the total variation distance is always nonnegative, demographic parity ensures the lack of disparity amplification.
	
	If the WAE worldview holds, we have $\Ycs \perp Z$, so every optimal model satisfies $\Yps \perp Z$.
	This implies demographic parity by Definition~\ref{def:demparity}.
\end{proof}

The first part of Theorem~\ref{thm:dispampwae} shows that we can guarantee that a model will not have disparity amplification by training it to pass the demographic parity test.
However, this does not mean that demographic parity is appropriate for every situation.
First, we remind the reader that the lack of disparity amplification does not mean that the model will be free of all issues related to discrimination.
In particular, disparity amplification is only designed to catch the type of discrimination akin to \emph{disparate impact}.
If the WAE worldview holds, demographic parity is the only way to avoid disparity amplification, so it makes sense to enforce demographic parity.
On the other hand, blindly enforcing demographic parity may introduce other forms of discrimination.
For example, the U.S.\ Supreme Court held in \textit{Ricci v.\ DeStefano}~\citeyearpar{ricci2009} that the prohibition against intentional discrimination can sometimes override the consideration of disparate impact, ruling that an employer unlawfully discriminated by discarding the results of a bona fide job-related test because of a racial performance gap.

Second, demographic parity can lower the utility of a model.
If the WAE worldview does not hold, $\dtv(\Ycs|Z{=}0, \Ycs|Z{=}1)$ is positive, and Theorem~\ref{thm:notwae} shows that any model that satisfies demographic parity must be suboptimal.
In fact, the more we deviate from the WAE worldview, the lower the maximum possible construct accuracy becomes.
\begin{theorem} \label{thm:notwae}
	If a model satisfies demographic parity, the construct accuracy of the model is at most $1 - \frac{1}{2} \dtv(\Ycs|Z{=}0, \Ycs|Z{=}1)$.
	Moreover, there exists a distribution of $\Yps$ that satisfies demographic parity and attains this construct accuracy.
\end{theorem}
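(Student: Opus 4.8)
The plan is to prove the upper bound by a coupling argument and then match it with an explicit construction. First I would fix notation: write $p_z$ for the distribution of $\Ycs$ conditioned on $Z{=}z$, and let $q$ denote the distribution of $\Yps$ conditioned on $Z{=}z$, which by the demographic parity test (Definition~\ref{def:demparity}) does not depend on $z$. The core observation is a coupling-type bound on the per-group accuracy: for each $z \in \binset$,
\begin{align*}
\Pr[\Ycs{=}\Yps \mid Z{=}z] &= \sum_y \Pr[\Ycs{=}y, \Yps{=}y \mid Z{=}z] \\
&\le \sum_y \min\bigl(p_z(y), q(y)\bigr) = 1 - \dtv(p_z, q),
\end{align*}
where the inequality bounds each joint probability by its two marginals and the final equality is the identity $\sum_y \min(p(y), q(y)) = 1 - \dtv(p, q)$, which follows directly from Definition~\ref{def:dtv}.

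Averaging this bound over $z{=}0$ and $z{=}1$ in the definition of construct accuracy (Equation~\ref{eqn:accuracy}) and then applying the triangle inequality for $\dtv$ yields
\begin{align*}
\tfrac{1}{2}\bigl(\Pr[\Ycs{=}\Yps \mid Z{=}0] + \Pr[\Ycs{=}\Yps \mid Z{=}1]\bigr)
&\le 1 - \tfrac{1}{2}\bigl(\dtv(p_0, q) + \dtv(p_1, q)\bigr) \\
&\le 1 - \tfrac{1}{2}\dtv(p_0, p_1),
\end{align*}
and since $\dtv(p_0, p_1) = \dtv(\Ycs|Z{=}0, \Ycs|Z{=}1)$, this is exactly the claimed bound.

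For the ``moreover'' part I would exhibit a joint distribution of $(\Ycs, \Yps, Z)$ that makes both inequalities tight. The idea is to let the model reproduce the construct exactly on one group and reuse the \emph{same} output marginal on the other: conditioned on $Z{=}0$, set $\Yps = \Ycs$; conditioned on $Z{=}1$, let $(\Ycs, \Yps)$ follow a maximal coupling of $p_1$ and $p_0$, i.e., one for which $\Pr[\Ycs{=}\Yps \mid Z{=}1] = 1 - \dtv(p_0, p_1)$, which exists because the support is finite. Then $\Yps \mid Z{=}0$ and $\Yps \mid Z{=}1$ are both distributed according to $p_0$, so demographic parity holds, while the construct accuracy equals $\tfrac{1}{2}\bigl(1 + (1 - \dtv(p_0, p_1))\bigr) = 1 - \tfrac{1}{2}\dtv(\Ycs|Z{=}0, \Ycs|Z{=}1)$. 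To make this a genuine model rather than an abstract coupling, one can let the feature vector $X$ reveal $\Ycs$ and have the (randomized) model implement the map just described.

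The upper-bound half is routine given the standard properties of $\dtv$. The step that takes a little thought is the tightness construction: equality in the triangle inequality forces $q$ to lie on a $\dtv$-geodesic between $p_0$ and $p_1$, and the convenient choice is the endpoint $q = p_0$, which does double duty by simultaneously saturating the triangle inequality and delivering perfect construct accuracy on group $0$, leaving only the unavoidable $\dtv(p_0, p_1)$ loss on group $1$. I would also double-check that the paper's notion of a model permits $\Yps$ to depend on $\Ycs$ through the features $X$; since it does, the construction is legitimate.
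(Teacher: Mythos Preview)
Your proof is correct and follows essentially the same route as the paper: the per-group bound $\Pr[\Ycs{=}\Yps \mid Z{=}z] \le 1 - \dtv(p_z,q)$ via the $\sum_y \min(p_z(y),q(y))$ identity, averaging plus the triangle inequality for the upper bound, and the construction $\Yps=\Ycs$ on $Z{=}0$ together with a maximal coupling on $Z{=}1$ for tightness. The only cosmetic difference is that the paper isolates the $\min$ identity as a separate lemma, whereas you invoke it directly.
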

To prove this theorem, we will use Lemma~\ref{lem:dtvmin}.
\begin{lemma} \label{lem:dtvmin}
	Let $Y_0$ and $Y_1$ be categorical random variables with finite supports $\Ycal_0$ and $\Ycal_1$.
	Then,
	\[ \sum_{y \in \Ycal_0 \cup \Ycal_1} \min\Big(\Pr[Y_0{=}y], \Pr[Y_1{=}y]\Big) = 1 - \dtv(Y_0, Y_1). \]
\end{lemma}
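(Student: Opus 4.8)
The plan is to reduce everything to the elementary pointwise identity $\min(a, b) = \tfrac{1}{2}\big(a + b - |a - b|\big)$, valid for all real $a, b$. Applying this with $a = \Pr[Y_0{=}y]$ and $b = \Pr[Y_1{=}y]$ and summing over $y \in \Ycal_0 \cup \Ycal_1$ gives
\[
\sum_{y \in \Ycal_0 \cup \Ycal_1} \min\big(\Pr[Y_0{=}y], \Pr[Y_1{=}y]\big)
= \frac{1}{2}\!\left(\sum_{y} \Pr[Y_0{=}y] + \sum_{y} \Pr[Y_1{=}y] - \sum_{y} \big|\Pr[Y_0{=}y] - \Pr[Y_1{=}y]\big|\right),
\]
where all sums range over $y \in \Ycal_0 \cup \Ycal_1$.

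Next I would evaluate the three sums on the right. The first is $\sum_{y \in \Ycal_0 \cup \Ycal_1} \Pr[Y_0{=}y] = 1$: since $Y_0$ is supported on $\Ycal_0 \subseteq \Ycal_0 \cup \Ycal_1$, extending the index set only adds terms that are zero, so the total is the full probability mass $1$. Symmetrically, the second sum equals $1$. The third sum is exactly $2\,\dtv(Y_0, Y_1)$ by Definition~\ref{def:dtv}. Substituting yields $\tfrac{1}{2}(1 + 1 - 2\,\dtv(Y_0, Y_1)) = 1 - \dtv(Y_0, Y_1)$, which is the claim.

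There is essentially no hard step here; the only point that needs a word of care is the remark that the sums of $\Pr[Y_0{=}y]$ and $\Pr[Y_1{=}y]$ over the \emph{union} $\Ycal_0 \cup \Ycal_1$ still each equal $1$ — i.e., that enlarging the summation range past a variable's own support contributes nothing — and that the definition of $\dtv$ in Definition~\ref{def:dtv} also uses this same union as its index set, so the $|\,\cdot\,|$-sum matches $2\,\dtv$ term for term. Everything else is the pointwise $\min$ identity and linearity of summation.
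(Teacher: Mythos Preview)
Your proof is correct and essentially the same as the paper's: the paper writes $|p_y - q_y| = \max(p_y,q_y) - \min(p_y,q_y)$ and $p_y + q_y = \max(p_y,q_y) + \min(p_y,q_y)$, then subtracts the two summed equations, which is just your identity $\min(a,b) = \tfrac{1}{2}(a+b-|a-b|)$ unpacked into two lines. Your remark about why summing over $\Ycal_0 \cup \Ycal_1$ still gives total mass $1$ is exactly the care the paper silently relies on.
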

\begin{proof}[Proof of Lemma~\ref{lem:dtvmin}]
	For brevity, let $p_y = \Pr[Y_0{=}y]$ and $q_y = \Pr[Y_1{=}y]$.
	We can then rewrite the total variation distance in terms of $\max$ and $\min$.
	\begin{align*}
	2 \dtv(Y_0, Y_1) &= \textstyle \sum_{y \in \Ycal_0 \cup \Ycal_1} |p_y - q_y| \\
	&= \textstyle \sum_{y \in \Ycal_0 \cup \Ycal_1} \big(\max(p_y, q_y) - \min(p_y, q_y)\big).
	\end{align*}
	In addition, we have
	\[ \sum_{y \in \Ycal_0 \cup \Ycal_1} \Big(\max(p_y, q_y) + \min(p_y, q_y)\Big) = \sum_{y \in \Ycal_0 \cup \Ycal_1} (p_y + q_y) = 2. \]
	Subtracting the first equation from the second gives us $\sum \min(p_y, q_y) \allowbreak = 1 - \dtv(Y_0, Y_1)$, which is what we want.
\end{proof}
\begin{proof}[Proof of Theorem~\ref{thm:notwae}]
	We first prove the upper bound on the construct accuracy.
	Let $\Ycalcs$ and $\Ycalps$ be the supports of $\Ycs$ and $\Yps$, respectively.
	Then, by the law of total probability we have
	\[ \Pr[\Ycs{=}\ycs, \Yps{=}\ycs \mid Z{=}z] \le \min(\Pr[\Ycs{=}\ycs \mid Z{=}z], \Pr[\Yps{=}\ycs \mid Z{=}z]) \]
	for all $\ycs \in \Ycalcs \cup \Ycalps$ and $z \in \binset$.
	We then sum this over $\ycs$ and apply Lemma~\ref{lem:dtvmin} to get
	\begin{align*}
	&\Pr[\Ycs{=}\Yps \mid Z{=}z] \\
	&= \textstyle \sum_{\ycs \in \Ycalcs \cup \Ycalps} \Pr[\Ycs{=}\ycs, \Yps{=}\ycs \mid Z{=}z] \\
	&\le \textstyle \sum_{\ycs \in \Ycalcs \cup \Ycalps} \min\big(\Pr[\Ycs{=}\ycs \mid Z{=}z], \Pr[\Yps{=}\ycs \mid Z{=}z]\big) \\
	&= 1 - \dtv(\Ycs|Z{=}z, \Yps|Z{=}z) \\
	&= 1 - \dtv(\Ycs|Z{=}z, \Yps),
	\end{align*}
	where the last equality follows from our assumption that the model satisfies demographic parity.
	Therefore, the construct accuracy can be bounded as
	\begin{align*}
	&\textstyle \frac{1}{2} \big(\Pr[\Ycs{=}\Yps \mid Z{=}0] + \Pr[\Ycs{=}\Yps \mid Z{=}1]\big) \\
	&\textstyle \le \frac{1}{2} \big(1 - \dtv(\Ycs|Z{=}0, \Yps) + 1 - \dtv(\Ycs|Z{=}1, \Yps)\big) \\
	&\textstyle \le 1 - \frac{1}{2} \dtv(\Ycs|Z{=}0, \Ycs|Z{=}1),
	\end{align*}
	where the last inequality is an application of the triangle inequality.
	
	Now we construct a random variable $\Yps$ that satisfies demographic parity and attains this bound.
	When $Z{=}0$, we simply let $\Yps = \Ycs$, making the first term in \eqref{eqn:accuracy} equal to 1.
	When $Z{=}1$, we constrain the marginal distribution of $(\Yps|Z{=}1)$ to be the same as that of $(\Yps|Z{=}0) = (\Ycs|Z{=}0)$, and we make the joint distribution of $(\Ycs|Z{=}1)$ and $(\Yps|Z{=}1)$ a maximal coupling~\cite[pp.\ 19--20]{lindvall2002lectures}.
	Then, by the theorem in \cite[p.\ 19]{lindvall2002lectures}, such $\Yps$ attains the value of $1 - \dtv(\Yps|Z{=}1, \Ycs{=}1) = 1 - \dtv(\Ycs|Z{=}0, \Ycs|Z{=}1)$ for the second term of \eqref{eqn:accuracy}.
	This means that the construct accuracy, which is the average of the two terms, is $1 - \frac{1}{2} \dtv(\Ycs|Z{=}0, \Ycs|Z{=}1)$, which is what we want.
	Moreover, $(\Yps|Z{=}1)$ and $(\Yps|Z{=}0)$ have the same distribution, so $\Yps$ satisfies demographic parity.
\end{proof}
Theorems~\ref{thm:dispampwae} and \ref{thm:notwae} demonstrate that the WAE worldview, combined with the desire to avoid disparity amplification while retaining the utility of models, motivates the demographic parity test.

\subsection{Equalized Odds and WYSIWYG}
We now argue that a similar relationship exists between the equalized odds test and the WYSIWYG worldview.
\begin{theorem} \label{thm:dispampwysiwyg}
	If the WYSIWYG worldview holds, a model that passes the equalized odds test does not have disparity amplification under Definition~\ref{def:dispamp}. Moreover, if the WYSIWYG worldview holds, every construct optimal model satisfies equalized odds.
\end{theorem}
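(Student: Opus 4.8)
The plan is to prove both parts by exploiting the fact that, under WYSIWYG, the construct disparity equals $\dtv(\Yos|Z{=}0, \Yos|Z{=}1)$, and that passing the equalized odds test means $\Yps$ is obtained from $\Yos$ by a noisy channel that does not depend on $Z$.

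\emph{No disparity amplification.} First I would record the key fact that total variation distance is non-expansive under a common stochastic map: if $q(\cdot \mid \cdot)$ is a Markov kernel and $\mu_0, \mu_1$ are distributions on its input space, then $\dtv(q\mu_0, q\mu_1) \le \dtv(\mu_0, \mu_1)$. This follows from the triangle inequality for sums together with $\sum_{\yps} q(\yps \mid \yos) = 1$:
\[
2\dtv(q\mu_0, q\mu_1) = \sum_{\yps}\Big|\sum_{\yos} q(\yps\mid\yos)\,\big(\mu_0(\yos) - \mu_1(\yos)\big)\Big| \le \sum_{\yos}\big|\mu_0(\yos) - \mu_1(\yos)\big| = 2\dtv(\mu_0,\mu_1).
\]
Next, let $q(\yps \mid \yos)$ denote the common value $\Pr[\Yps{=}\yps \mid \Yos{=}\yos, Z{=}0] = \Pr[\Yps{=}\yps \mid \Yos{=}\yos, Z{=}1]$ guaranteed by the equalized odds test. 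By the law of total probability, $\Pr[\Yps{=}\yps \mid Z{=}z] = \sum_{\yos} q(\yps\mid\yos)\,\Pr[\Yos{=}\yos \mid Z{=}z]$, so $(\Yps \mid Z{=}z)$ is the image of $(\Yos \mid Z{=}z)$ under the \emph{same} kernel $q$ for both $z$. Applying the non-expansiveness fact gives $\dtv(\Yps|Z{=}0, \Yps|Z{=}1) \le \dtv(\Yos|Z{=}0, \Yos|Z{=}1)$, and WYSIWYG ($\Ycs = \Yos$) rewrites the right-hand side as $\dtv(\Ycs|Z{=}0, \Ycs|Z{=}1)$; this is exactly the negation of Equation~\ref{eqn:dispamp}, so there is no disparity amplification.

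\emph{Construct optimal models pass the test.} If a model is construct optimal, then $\Yps = \Ycs$ with probability 1, and WYSIWYG gives $\Ycs = \Yos$, so $\Yps = \Yos$ with probability 1. Hence, for every $\yos$ with positive conditional probability, $\Pr[\Yps{=}\yps \mid \Yos{=}\yos, Z{=}z] = \mathds{1}[\yps = \yos]$, which does not depend on $z$. Therefore the equalized odds test is satisfied.

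\emph{Main obstacle.} The substantive ingredient is the non-expansiveness of $\dtv$ under a common channel (a data-processing inequality); the rest is bookkeeping with the law of total probability and the definition of WYSIWYG. The computation is short, but the care points are that the kernel $q$ is genuinely $Z$-independent (this is precisely what equalized odds buys, and it fails without WYSIWYG because then the relevant channel would act on $\Yos$ rather than $\Ycs$) and that conditioning on probability-zero values of $\Yos$ is harmless.
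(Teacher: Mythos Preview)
Your proposal is correct and follows essentially the same approach as the paper: both arguments reduce to the data-processing inequality for total variation (the paper writes out the triangle-inequality computation explicitly rather than naming it), and both handle the second part by noting that $\Yps = \Ycs = \Yos$ makes the conditional $\Pr[\Yps{=}\yps \mid \Yos{=}\yos, Z{=}z]$ deterministic and hence $Z$-independent. The only cosmetic difference is the order of substitution---the paper applies WYSIWYG first to rewrite equalized odds in terms of $\Ycs$ and then bounds the output disparity, whereas you bound in terms of $\Yos$ first and invoke WYSIWYG at the end.
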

\begin{proof}
	Let $\Ycalcs$ and $\Ycalps$ be the supports of $\Ycs$ and $\Yps$, respectively.
	Applying the WYSIWYG worldview to the definition of equalized odds, we get $\Pr[\Yps{=}\yps \mid \Ycs{=}\ycs, Z{=}0] = \Pr[\Yps{=}\yps \mid \Ycs{=}\ycs, Z{=}1] = \Pr[\Yps{=}\yps \mid \Ycs{=}\ycs]$ for all $\ycs \in \Ycalcs$ and $\yps \in \Ycalps$.
	Therefore, we have
	\begin{multline*}
	\textstyle \dtv(\Yps|Z{=}0, \Yps|Z{=}1) \\
	\shoveleft{\textstyle = \frac{1}{2} \sum_{\yps \in \Ycalps} \big|\Pr[\Yps{=}\yps \mid Z{=}0] - \Pr[\Yps{=}\yps \mid Z{=}1]\big|} \\
	\shoveleft{\textstyle = \frac{1}{2} \sum_{\yps \in \Ycalps} \Big|\sum_{\ycs \in \Ycalcs} \Pr[\Yps{=}\yps \mid \Ycs{=}\ycs]} \\
	\shoveright{\textstyle \cdot \big(\Pr[\Ycs{=}\ycs \mid Z{=}0] - \Pr[\Ycs{=}\ycs \mid Z{=}1]\big)\Big|} \\
	\shoveleft{\textstyle  \le \frac{1}{2} \sum_{\yps \in \Ycalps} \sum_{\ycs \in \Ycalcs} \Pr[\Yps{=}\yps \mid \Ycs{=}\ycs]} \\
	\shoveright{\textstyle \cdot \big|\Pr[\Ycs{=}\ycs \mid Z{=}0] - \Pr[\Ycs{=}\ycs \mid Z{=}1]\big|} \\
	\shoveleft{\textstyle = \frac{1}{2} \sum_{\ycs \in \Ycalcs} \Big(\big|\Pr[\Ycs{=}\ycs \mid Z{=}0] - \Pr[\Ycs{=}\ycs \mid Z{=}1]\big|} \\
	\shoveright{\textstyle \cdot \sum_{\yps \in \Ycalps} \Pr[\Yps{=}\yps \mid \Ycs{=}\ycs]\Big)} \\
	\shoveleft{\textstyle = \frac{1}{2} \sum_{\ycs \in \Ycalcs} \big|\Pr[\Ycs{=}\ycs \mid Z{=}0] - \Pr[\Ycs{=}\ycs \mid Z{=}1]\big|} \\
	\shoveleft{\textstyle = \dtv(\Ycs|Z{=}0, \Ycs|Z{=}1). \hfill}
	\end{multline*}
	This concludes the proof of the first statement.
	
	For an optimal model, we have $\Yps = \Ycs = \Yos$ by the WYSIWYG worldview.
	Because $\Yos$ fully determines the value of $\Yps$, Definition~\ref{def:eqodds} implies that every optimal model satisfies equalized odds.
\end{proof}

On the other hand, our intuition is that when the observation process is biased, and WYSIWYG does not hold, treating the observation $\Yos$ as accurate, as implicit with equalized odds, may lead to a failure to pass our construct-based criterion.
We prove as much:
\begin{theorem} \label{thm:notwysiwyg}
	If the WYSIWYG worldview does not hold, a model passing the equalized odd test can still have disparity amplification.
\end{theorem}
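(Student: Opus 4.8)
The plan is to prove the statement by exhibiting an explicit counterexample: a joint distribution of $\Ycs$, $\Yos$, $\Yps$, and $Z$ under which WYSIWYG fails, yet some model passes the equalized odds test while exhibiting disparity amplification in the sense of Definition~\ref{def:dispamp}. The guiding intuition is that the equalized odds test conditions on the \emph{observed} label $\Yos$, not on the construct $\Ycs$, so a model that merely reproduces a biased $\Yos$ will pass the test even though the disparity it inherits has no basis in $\Ycs$. This is exactly the situation warned about in Example~1, where a group-dependent gap between arrests and reoffenses makes $\Yos$ a biased proxy for $\Ycs$.

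Concretely, I would take everything binary. Let $Z$ be uniform on $\binset$; let $\Ycs$ be uniform on $\binset$ and independent of $Z$, so that $\dtv(\Ycs|Z{=}0, \Ycs|Z{=}1) = 0$; and let the observation be biased against group $1$ by setting $\Yos = \Ycs$ when $Z{=}0$ and $\Yos = 0$ when $Z{=}1$. Then take the model $\Yps = \Yos$ — the model that predicts the observed label perfectly. (As in the proof of Theorem~\ref{thm:notwae}, where the construction sets $\Yps = \Ycs$ on one group, a ``model'' here is identified with the joint distribution it induces together with $\Ycs$, $\Yos$, and $Z$, so there is no issue with $\Yps$ being a function of $\Yos$.)

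The three verifications are then short. First, WYSIWYG fails because on the event $Z{=}1,\,\Ycs{=}1$, which has probability $1/4$, we have $\Yos = 0 \ne 1 = \Ycs$. Second, the equalized odds test is passed because $\Pr[\Yps{=}\yps \mid \Yos{=}\yos, Z{=}z] = \mathbf{1}[\yps{=}\yos]$, which does not depend on $z$. Third, the model has disparity amplification: since $\Yps = \Yos$, the output disparity is $\dtv(\Yps|Z{=}0, \Yps|Z{=}1) = \dtv(\Yos|Z{=}0, \Yos|Z{=}1) = |\tfrac12 - 0| = \tfrac12$, which strictly exceeds $\dtv(\Ycs|Z{=}0, \Ycs|Z{=}1) = 0$. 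One could also note that scaling up this construction (e.g.\ making $\Ycs$ balanced but biasing $\Yos$ further) shows the amplified disparity can be made as large as one wants.

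I do not anticipate a substantive obstacle; the only point that needs care is arranging the equalized odds conditional probabilities to come out \emph{exactly} equal across the two groups while the output disparity remains strictly positive, and taking $\Yps$ to be a deterministic function of $\Yos$ handles both at once. A secondary interpretive point is the scope of the claim: I read ``if the WYSIWYG worldview does not hold, a model passing the equalized odds test can still have disparity amplification'' as asserting the existence of such a scenario (paralleling the ``there exists a distribution'' clause of Theorem~\ref{thm:notwae}), which the counterexample above establishes; the more general claim that \emph{every} failure of WYSIWYG admits such a model would require additional hypotheses and is, I believe, not what is intended here.
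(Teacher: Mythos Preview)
Your approach is essentially the paper's: both exploit that the equalized odds test constrains only the joint law of $(\Yos,\Yps,Z)$ and says nothing about $\Ycs$, so once WYSIWYG is dropped one may arrange $\Ycs$ to have arbitrarily small disparity while $\Yps$ retains positive output disparity. The paper argues this abstractly---take any equalized-odds model with nonzero output disparity, then pick $\Ycs$ with small enough disparity---whereas you write down a fully explicit instance; the underlying idea is the same, and your reading of the theorem's scope (existence of such a distribution) matches the paper's.

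One small technical wrinkle: in your construction $\Yos\equiv 0$ on $\{Z{=}1\}$, so $\Pr[\Yps{=}\yps\mid\Yos{=}1,Z{=}1]$ is undefined, and whether Definition~\ref{def:eqodds} is satisfied then hinges on whether ``for all $\yos$'' is read as ranging only over values seen with positive probability in both groups. The paper is explicitly careful about exactly this kind of degeneracy in the proof of Theorem~\ref{thm:pp}, so to be safe you should give $\Yos$ full support in group~1 as well. For instance, keep $\Ycs\perp Z$ uniform, set $\Yos=\Ycs$ on $\{Z{=}0\}$, and let $\Yos$ be an independent $\mathrm{Bernoulli}(1/4)$ on $\{Z{=}1\}$; then $\Yps=\Yos$ passes equalized odds unambiguously, WYSIWYG still fails, and $\dtv(\Yps|Z{=}0,\Yps|Z{=}1)=\tfrac14>0=\dtv(\Ycs|Z{=}0,\Ycs|Z{=}1)$.
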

\begin{proof}
	We show that there exists a joint distribution of $\Ycs$, $\Yos$, $\Yps$, and $Z$ such that a model with equalized odds still has disparity amplification.
	Many models with equalized odds have nonzero output disparity, i.e., $\dtv(\Yps|Z{=}0, \Yps|Z{=}1) > 0$.
	Consider any such model.
	Since the WYSIWYG worldview does not hold, we have no guarantee that $\Ycs$ will resemble $\Yos$ in any way.
	Therefore, the equalized odds requirement does not restrict the distribution of $\Ycs$, and the model can have disparity amplification if $\dtv(\Ycs|Z{=}0, \Ycs|Z{=}1)$ is small enough.
\end{proof}

\subsection{Predictive Parity}

Under the WYSIWYG worldview, optimal models pass the predictive parity test, but any model that passes the test must satisfy $\dtv(\Yps|Z{=}0, \Yps|Z{=}1) \ge \dtv(\Ycs|Z{=}0, \Ycs|Z{=}1)$, as can be seen from switching $\Ycs$ and $\Yps$ in the proof of the first part of Theorem~\ref{thm:dispampwysiwyg}.
The inequality here is in the opposite direction of that in \eqref{eqn:no-dispamp}, so the predictive parity test does not place any upper bound on the output disparity of $\Yps$ and guarantees that it is equal to that of $\Ycs$ or amplified beyond this limit.
In fact, the following theorem shows that, regardless of the worldview and the base rates of $\Yos$, even a model with almost the maximum output disparity can still pass the predictive parity test.
\begin{theorem} \label{thm:pp}
	Let $\Yos$ be a categorical random variable with finite support such that $\Pr[\Yos{=}\yos \mid Z{=}z]$ is positive for all $\yos$ and $z$.
	Then, for any sufficiently small $\epsilon > 0$, there exists a model that passes the predictive parity test such that $\dtv(\Yps|Z{=}0, \Yps|Z{=}1) = 1 - \epsilon$.
\end{theorem}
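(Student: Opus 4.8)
The plan is to exhibit a joint distribution of $\Yos$, $\Yps$, and $Z$ (the construct $\Ycs$ is irrelevant to the statement and may be chosen arbitrarily) that is consistent with the prescribed conditionals $\Pr[\Yos{=}\yos \mid Z{=}z]$, passes the predictive parity test, and has output disparity exactly $1 - \epsilon$. Write $r_{z,\yos} = \Pr[\Yos{=}\yos \mid Z{=}z]$, where $\yos$ ranges over the common finite support of $\Yos$, say of size $k$; by hypothesis every $r_{z,\yos}$ is positive.

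The key idea is to let $\Yps$ reveal $\Yos$ together with a tag that is almost perfectly correlated with $Z$, so that the two groups are pushed onto nearly disjoint sets of prediction values. Concretely, I would let $\Yps$ take values of the form $(\yos, b)$ with $b \in \binset$ and define the joint distribution so that, conditioned on $\Yps{=}(\yos,b)$, we always have $\Yos{=}\yos$; for group $Z{=}0$ put conditional mass $r_{0,\yos} - \eta$ on $(\yos,0)$ and mass $\eta$ on $(\yos,1)$, and symmetrically for group $Z{=}1$ put mass $\eta$ on $(\yos,0)$ and mass $r_{1,\yos} - \eta$ on $(\yos,1)$, for a small parameter $\eta>0$ to be fixed. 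Summing over $b$ recovers $r_{z,\yos}$ in each group, so this is a valid joint distribution with the required $\Yos$-marginals.

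Next I would check the two properties. For predictive parity: conditioned on $\Yps{=}(\yos,b)$ the variable $\Yos$ equals $\yos$ with probability $1$ irrespective of $Z$, so $\Pr[\Yos{=}\yos' \mid \Yps{=}(\yos,b), Z{=}0] = \Pr[\Yos{=}\yos' \mid \Yps{=}(\yos,b), Z{=}1]$ for every $\yos'$; moreover every value $(\yos,b)$ has positive probability in both groups, so these conditionals are well defined and the test is genuinely passed. For the output disparity, I would use the $\min$-form of total variation distance (Lemma~\ref{lem:dtvmin}): the overlap $\sum_v \min(\Pr[\Yps{=}v\mid Z{=}0], \Pr[\Yps{=}v\mid Z{=}1])$ equals $\sum_{\yos}\big(\min(r_{0,\yos}-\eta,\eta) + \min(\eta, r_{1,\yos}-\eta)\big) = 2k\eta$, provided $\eta < r_{z,\yos}-\eta$ for all $z,\yos$. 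Choosing $\eta = \epsilon/(2k)$ then gives $\dtv(\Yps|Z{=}0, \Yps|Z{=}1) = 1 - 2k\eta = 1-\epsilon$.

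I do not expect a substantive obstacle here; the only delicate point — and the source of the ``sufficiently small $\epsilon$'' hypothesis — is that the perturbation mass $\eta = \epsilon/(2k)$ must be carved out of each $r_{z,\yos}$ while keeping all masses strictly positive, i.e.\ we need $\epsilon < k\min_{z,\yos} r_{z,\yos}$, a positive bound because $\Yos$ has finite support and all $r_{z,\yos}$ are positive. Conceptually, the proof just makes explicit that predictive parity restricts only the composition of $\Yos$ within each prediction bin, not how unevenly the two groups are spread across bins, so the output disparity can be driven arbitrarily close to its maximum value of $1$ without violating the test.
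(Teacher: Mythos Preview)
Your proof is correct but proceeds by a genuinely different construction than the paper's. The paper takes $\Yps\in\binset$ to be a noisy copy of $Z$ (probability $1-\epsilon/2$ on $\yps=z$, probability $\epsilon/2$ on $\yps\neq z$) and then solves a $2\times2$ linear system to find conditionals $p_{\yos\yps}=\Pr[\Yos{=}\yos\mid\Yps{=}\yps]$ that are group-independent and consistent with the prescribed $\Pr[\Yos{=}\yos\mid Z{=}z]$; showing these are valid probabilities for small $\epsilon$ is where the ``sufficiently small'' enters. Your approach instead enlarges the prediction space to $2k$ values $(\yos,b)$ and makes $\Yps$ deterministically reveal $\Yos$, so predictive parity is automatic and no linear system needs to be solved; the ``sufficiently small'' condition arises instead from needing $\eta=\epsilon/(2k)<\tfrac12\min_{z,\yos}r_{z,\yos}$ so that all masses stay positive and the $\min$ computation comes out cleanly. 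Your route is more elementary and makes the conceptual point (predictive parity constrains only the $\Yos$-composition within each bin, not how the groups are distributed across bins) especially transparent; the paper's route has the modest advantage of showing that even a binary prediction space suffices.
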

\begin{proof}
	The main idea behind the proof is that the model simply outputs the value of $Z$.
	However, because predictive parity is not well-defined if $\Pr[\Yps{=}\yps, Z{=}z] = 0$ for any $\yps$ and $z$, we must allow the model to output the other value with some very small probability.
	More specifically, we construct a model such that
	\[ \Pr[\Yps{=}\yps \mid Z{=}z] = \begin{cases}
	1 - \frac{\epsilon}{2}, & \text{if } \yps = z \\
	\frac{\epsilon}{2}, & \text{if } \yps \neq z.
	\end{cases} \]
	We can choose which values our constructed model outputs, so assume without loss of generality that $\Yps \in \binset$.
	
	Let $\Ycalos$ be the support of $\Yos$.
	By the predictive parity test, we have $\Pr[\Yos{=}\yos \mid \Yps{=}\yps, Z{=}0] = \Pr[\Yos{=}\yos \mid \Yps{=}\yps, Z{=}1] = \Pr[\Yos{=}\yos \mid \Yps{=}\yps]$ for all $\yos \in \Ycalos$ and $\yps \in \binset$.
	Let $p_{\yos\yps} = \Pr[\Yos{=}\yos \mid \Yps{=}\yps]$.
	Our goal is to find the values of $p_{\yos0}$ and $p_{\yos1}$ that are consistent with the fixed observed probabilities $\Pr[\Yos{=}\yos \mid Z{=}0]$ and $\Pr[\Yos{=}1 \mid Z{=}1]$.
	
	By the law of total probability, our model must satisfy
	\[\begin{pmatrix} \Pr[\Yos{=}\yos \mid Z{=}0] \\ \Pr[\Yos{=}\yos \mid Z{=}1] \end{pmatrix}
	= \begin{pmatrix} 1 - \frac{\epsilon}{2} & \frac{\epsilon}{2} \\ \frac{\epsilon}{2} & 1 - \frac{\epsilon}{2} \end{pmatrix}
	\begin{pmatrix} p_{\yos0} \\ p_{\yos1} \end{pmatrix}. \]
	Solving for $p_{\yos0}$ and $p_{\yos1}$, we see that they converge to $\Pr[\Yos{=}\yos \mid Z{=}0]$ and $\Pr[\Yos{=}\yos \mid Z{=}1]$, respectively, as $\epsilon$ approaches zero.
	By assumption, these probabilities are positive.
	Since $\Ycalos$ is finite, this means that there exists a small enough $\epsilon > 0$ such that $p_{\yos0}, p_{\yos1} > 0$ for all $\yos \in \Ycalos$.
	Moreover, it is easy to verify that $\sum_{\yos \in \Ycalos} p_{\yos0} = \sum_{\yos \in \Ycalos} p_{\yos1} = 1$, making them valid probability distributions.
	
	Now, when given $\Yos{=}\yos$ and $Z{=}z$, our model can output $\Yps{=}\yps$ with probability
	\[ \Pr[\Yps{=}\yps \mid \Yos{=}\yos, Z{=}z] = \frac{p_{\yos\yps} \cdot \Pr[\Yps{=}\yps \mid Z{=}z]}{\Pr[\Yos{=}\yos \mid Z{=}z]}, \]
	where $\Pr[\Yps{=}\yps \mid Z{=}z]$ is either $\frac{\epsilon}{2}$ or $1 - \frac{\epsilon}{2}$ depending on whether $\yps = z$.
\end{proof}
Because the predictive parity test allows models, such as the one we constructed in the above proof, that clearly amplify disparity, it is unsuitable for ensuring nondiscrimination as characterized by output disparity.

\subsection{Calibration}
Compared to the predictive parity test, the calibration test imposes an additional requirement that the output of the model must be the correct probability.
Theorem~\ref{thm:calib} shows this additional requirement limits the model behavior by the disparity in observed values, ruling out the model described in the proof of Theorem~\ref{thm:pp}.
\begin{theorem} \label{thm:calib}
	If the WYSIWYG worldview holds, a model that passes the calibration test satisfies $\dtv(\Ycs|Z{=}0, \Ycs|Z{=}1) = |\E[\Yps \mid Z{=}0] - \E[\Yps \mid Z{=}1]|$.
	Moreover, if the WYSIWYG worldview holds, every construct optimal model with binary $\Yos$ satisfies calibration.
\end{theorem}
\begin{proof}
	Combining the definition of calibration with the WYSIWYG worldview, we get a binary $\Ycs$ with $\Pr[\Ycs{=}1 \mid \Yps{=}\yps, Z{=}0] = \yps$.
	Therefore, we have
	\begin{align*}
	\Pr[\Ycs{=}1 \mid Z{=}0]
	&\textstyle = \sum_{\yps \in \Ycalps} \Pr[\Ycs{=}1 \mid \Yps{=}\yps, Z{=}0] \cdot \Pr[\Yps{=}\yps \mid Z{=}0] \\
	&\textstyle = \sum_{\yps \in \Ycalps} \yps \cdot \Pr[\Yps{=}\yps \mid Z{=}0] \\
	&= \E[\Yps \mid Z{=}0],
	\end{align*}
	and a similar statement holds for $Z{=}1$.
	
	Since $\Ycs$ is binary, the construct disparity then becomes
	\begin{align*}
	\dtv(\Ycs|Z{=}0, \Ycs|Z{=}1)
	&= |\Pr[\Ycs{=}1 \mid Z{=}0] - \Pr[\Ycs{=}1 \mid Z{=}1]| \\
	&= |\E[\Yps \mid Z{=}0] - \E[\Yps \mid Z{=}1]|,
	\end{align*}
	which is what we want for the first statement.
	
	To prove the second statement, note that an optimal model satisfies $\Yps = \Ycs = \Yos$ by the WYSIWYG worldview.
	Then, for binary $\Yos \in \binset$ it is easy to verify that calibration holds.
\end{proof}
Unlike Theorems~\ref{thm:dispampwae} and \ref{thm:dispampwysiwyg}, which bound the \emph{total variation distance} between the outputs by the disparity in the construct, this theorem bounds only the difference in the \emph{expected values} of the outputs.
This contrast is significant because expected value, unlike total variation distances, are not robust to post-processing.
%
%
We demonstrate this issue with an example where $\Ycs = \Yos$ and $Z$ are independent and uniformly random binary variables and the model sets the value of $\Yps$ as follows:
if $Z = 0$, then $\Yps = 0.5$;
if $Z = 1$ and $\Yos = 1$, then $\Yps = 0.5+\epsilon$ for some small positive constant $\epsilon$; and
if $Z = 1$ and $\Yos = 0$, then $\Yps = 0$ with probability $\frac{2\epsilon}{0.5+\epsilon}$ and $\Yps = 0.5+\epsilon$ otherwise.
Some computation reveals that this model passes the calibration test, with all of the $Z=0$ group receiving a prediction of $0.5$ and the vast majority of the $Z=1$ group receiving $0.5+\epsilon$.
However, in practice the predictions are often post-processed with a threshold because it is impossible to, say, admit half of a student.
Therefore, although the inter-group difference in the model predictions is small, it can be amplified if the threshold is set between $0.5$ and $0.5 + \epsilon$.
In this case, the resulting decision is almost perfectly correlated with $Z$ and exhibits disparity amplification.

As a result, in the rest of the paper we focus on equalized odds rather than predictive parity or calibration.
We leave as future work the identification of a discrimination criterion and a worldview that together motivate the predictive parity or calibration test.

\section{Connection to Misclassification}
\label{sec:misclassification}

Here, we show that the definition of disparity amplification is closely related to that given by Zafar et al.~\citeyearpar{zafar2017fairness-www} in their treatment of disparate misclassification rates.
First, we motivate the issue of disparate misclassification rates with an example.
Let $\Ycs$ and $Z$ be independent and uniformly random binary variables.
If $\Yps = \Ycs \oplus Z$, where $\oplus$ is the XOR, both protected groups are given the positive label exactly half of the time, so there is no output disparity.
However, one group always receives the correct classification and the other always receives the incorrect classification, so the disparity in the misclassification rates is as large as it can be.
This shows that a lack of disparity amplification does not imply a lack of disparity in misclassification rates.

Conversely, a lack of disparity in misclassification rates does not imply a lack of disparity amplification.
To see this, modify the above example so that $\Yps = Z$ instead.
Now, both groups have half of its members misclassified since $Z$ is independent of $\Ycs$, so they have the same overall misclassification rate.
On the other hand, we have $\dtv(\Ycs|Z{=}0, \Ycs|Z{=}1) = \dtv(\Ycs, \Ycs) = 0$ and $\dtv(\Yps|Z{=}0, \Yps|Z{=}1) = \dtv(Z|Z{=}0, Z|Z{=}1) = 1$.
Thus, $\Yps$ has disparity amplification.

However, we can still find a connection between misclassification parity and disparity amplification.
Let $C$ be the indicator $\mathds{1}(\Ycs = \Yps)$, and replace $\Yps$ with $C$ in the definition of output disparity (Definition~\ref{def:outputdisp}).
Since $C$ is binary, the resulting expression $\dtv(C|Z{=}0, C|Z{=}1)$ is simply the difference in the misclassification rates.
We would like to compare this value to some measure of disparity in the construct space.
Since our standard measure of $\dtv(\Ycs|Z{=}0, \Ycs|Z{=}1)$ does not necessarily justify inter-group differences in $C$, it may not be a correct measure to use.
Exploring what measures provide justification for disparate misclassification rates is interesting future work.

\section{Hybrid Worldviews} \label{sec:hybrid}
So far, we have assumed either the WAE or the WYSIWYG worldview.
While these worldviews are interesting from a theoretical perspective, in practice it is unlikely that these worldviews hold.

In this section, we propose a family of more realistic worldviews for the case where $\Ycs$ and $\Yos$ are categorical.
As we have depicted in Figure~\ref{fig:spaces}, worldviews describe the relationship between the construct and observed spaces.
Because our definition of disparity amplification has to do with inter-group disparities, here we focus specifically on the inter-group disparities in $\Ycs$ and $\Yos$.
Note that the WAE worldview has the effect of assuming that none of the disparity in $\Yos$ is explained by $\Ycs$.
By contrast, under the WYSIWYG worldview, all of the disparity in $\Yos$ is explained by $\Ycs$.
Described below is the $\alpha$-Hybrid worldview, which is a family of worldviews that occupy the space between the two extremes of WAE and WYSIWYG.
\begin{worldview}[$\alpha$-Hybrid] \label{def:hybrid}
	Let $\alpha \in [0, 1]$.
	Under the \emph{$\alpha$-Hybrid} worldview, exactly an $\alpha$ fraction of the disparity in $\Yos$ is explained by $\Ycs$.
	More formally,
	\begin{equation} \label{eqn:hybrid}
	\dtv(\Ycs|Z{=}0, \Ycs|Z{=}1) = \alpha \cdot \dtv(\Yos|Z{=}0, \Yos|Z{=}1)
	\end{equation}
\end{worldview}
While the WAE worldview is equivalent to the 0-Hybrid worldview,
the relationship between the WYSIWYG and 1-Hybrid worldviews is only unidirectional.
Although the WYSIWYG worldview implies the 1-Hybrid worldview, there are plenty of ways to satisfy $\dtv(\Ycs|Z{=}0, \Ycs|Z{=}1) = \dtv(\Yos|Z{=}0, \Yos|Z{=}1)$ even when the equality $\Ycs = \Yos$ does not hold.
If we wanted to make the relationship bidirectional, we could instead have assumed that $\Ycs$ can be broken down into two components, one of which satisfies WAE and the other WYSIWYG.
However, this would mean that every component of $\Ycs$ is either equal with respect to $Z$ (WAE) or observable (WYSIWYG), whereas in practice many inter-group disparities in the construct space are not easily observable.
Thus, to make the $\alpha$-Hybrid worldview more realistic, we sacrifice one direction of the relationship between the WYSIWYG and 1-Hybrid worldviews.

Now we introduce the $\alpha$-disparity test and prove that it corresponds to the $\alpha$-Hybrid worldview.
Unlike the demographic parity and equalized odds tests, the $\alpha$-disparity test is parametrized and therefore can be applied to various real-world situations.
\begin{definition}[$\alpha$-Disparity Test] \label{def:disparity}
	A model passes the \emph{$\alpha$-disparity test} if
	\begin{equation} \label{eqn:disparity}
	\dtv(\Yps|Z{=}0, \Yps|Z{=}1) \le \alpha \cdot \dtv(\Yos|Z{=}0, \Yos|Z{=}1).
	\end{equation}
\end{definition}
\begin{theorem} \label{thm:dispamphybrid}
	If the $\alpha$-Hybrid worldview holds, a model that passes the $\alpha$-disparity test does not have disparity amplification under Definition~\ref{def:dispamp}.
	Moreover, if the $\alpha$-Hybrid worldview holds, every construct optimal model satisfies the $\alpha$-disparity test.
\end{theorem}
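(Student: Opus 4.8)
The plan is to derive both statements directly from the defining (in)equalities, chaining them through the common quantity $\dtv(\Yos|Z{=}0, \Yos|Z{=}1)$; there is no real obstacle here, as the $\alpha$-disparity test is by construction the ``shadow'' of the $\alpha$-Hybrid worldview, in the same way that Theorems~\ref{thm:dispampwae} and \ref{thm:dispampwysiwyg} pair their tests with their worldviews.

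For the first statement, I would start from the hypothesis that the model passes the $\alpha$-disparity test, i.e., Equation~\ref{eqn:disparity}:
\[ \dtv(\Yps|Z{=}0, \Yps|Z{=}1) \le \alpha \cdot \dtv(\Yos|Z{=}0, \Yos|Z{=}1). \]
The $\alpha$-Hybrid worldview (Equation~\ref{eqn:hybrid}) says the right-hand side equals $\dtv(\Ycs|Z{=}0, \Ycs|Z{=}1)$. Substituting gives $\dtv(\Yps|Z{=}0, \Yps|Z{=}1) \le \dtv(\Ycs|Z{=}0, \Ycs|Z{=}1)$, which is exactly Equation~\ref{eqn:no-dispamp}, the negation of Equation~\ref{eqn:dispamp}. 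Hence no model passing the test can exhibit disparity amplification under this worldview.

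For the second statement, I would take any construct optimal model, so that $\Yps = \Ycs$ with probability $1$ (Definition~\ref{def:optimal}). Then the conditional distributions $(\Yps \mid Z{=}z)$ and $(\Ycs \mid Z{=}z)$ coincide for each $z \in \binset$, and therefore $\dtv(\Yps|Z{=}0, \Yps|Z{=}1) = \dtv(\Ycs|Z{=}0, \Ycs|Z{=}1)$. Applying the $\alpha$-Hybrid equality once more rewrites this as $\dtv(\Yps|Z{=}0, \Yps|Z{=}1) = \alpha \cdot \dtv(\Yos|Z{=}0, \Yos|Z{=}1)$, so Equation~\ref{eqn:disparity} holds with equality and the model passes the $\alpha$-disparity test.

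The only point I would state with any care is that pointwise equality $\Yps = \Ycs$ transfers to equality of the two conditional laws given $Z$, and thence to equality of the relevant total variation distances; everything else is substitution. It is also worth remarking, in parallel with the construct-optimality discussion following Definition~\ref{def:accuracy}, that a construct optimal model meets the $\alpha$-disparity bound with \emph{equality}, so the test is exactly as permissive as it needs to be in order to leave perfect prediction attainable.
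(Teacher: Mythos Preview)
Your proposal is correct and follows essentially the same route as the paper: chain the $\alpha$-disparity inequality with the $\alpha$-Hybrid equality for the first part, and substitute $\Yps$ for $\Ycs$ via construct optimality for the second. Your added remark that the pointwise equality $\Yps=\Ycs$ transfers to equality of the conditional laws is a slight elaboration of what the paper leaves implicit, but the argument is otherwise identical.
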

\begin{proof}
	To prove the first part of the theorem, we simply combine the inequality guaranteed by the $\alpha$-disparity test under \eqref{eqn:disparity} with the equation that defines the $\alpha$-Hybrid worldview under \eqref{eqn:hybrid}.
	We get
	\[ \dtv(\Yps|Z{=}0, \Yps|Z{=}1) \le \alpha \cdot \dtv(\Yos|Z{=}0, \Yos|Z{=}1) = \dtv(\Ycs|Z{=}0, \Ycs|Z{=}1), \]
	which is what we want.
	
	For the second part of the theorem, an optimal model has $\Ycs = \Yps$, so we can substitute the $\Ycs$ in \eqref{eqn:hybrid} with $\Yps$ to get
	\[ \dtv(\Yps|Z{=}0, \Yps|Z{=}1) = \alpha \cdot \dtv(\Yos|Z{=}0, \Yos|Z{=}1). \]
	This is simply the equality in \eqref{eqn:disparity}, so we are done.
\end{proof}

The $\alpha$-disparity test is closely related to demographic parity and equalized odds.
0-disparity is satisfied if and only if the output disparity is zero, so it is equivalent to demographic parity.
In addition, we can easily adapt the proof of Theorem~\ref{thm:dispampwysiwyg} to show that equalized odds implies 1-disparity.
However, because equalized odds imposes a condition for each possible value of $\Yos$, 1-disparity does not imply equalized odds.
Although it may thus seem that equalized odds is stronger and better than 1-disparity, recent results by Corbett-Davies and Goel~\citeyearpar{corbettdavies2018measure} show that the threshold rule, which they argue is optimal, does not lead to equalized odds in general.
Therefore, there is a trade-off between the stronger fairness guarantee provided by equalized odds and the higher utility that is attainable under 1-disparity.
Of course, the 1-disparity test has the additional benefit that it can be generalized to other values of $\alpha$.

We end this section with theorems describing the consequences of enforcing the $\alpha$-disparity test with a wrong value of $\alpha$.
These theorems are close analogues of Theorems~\ref{thm:notwae} and \ref{thm:notwysiwyg}, respectively.
\begin{theorem} \label{thm:smallalphatest}
	If the $\alpha$-Hybrid worldview holds, a model that passes the $\alpha'$-disparity test, with $\alpha > \alpha'$, has a construct accuracy at most $1 - \frac{1}{2} (\alpha - \alpha') \cdot \dtv(\Yos|Z{=}0, \Yos|Z{=}1)$.
\end{theorem}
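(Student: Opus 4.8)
The plan is to mirror the proof of Theorem~\ref{thm:notwae}, replacing its appeal to demographic parity with a combined use of the $\alpha$-disparity test and the $\alpha$-Hybrid worldview. First, exactly as in that proof, I would apply the law of total probability together with Lemma~\ref{lem:dtvmin} to obtain, for each $z \in \binset$, the per-group bound
\[ \Pr[\Ycs{=}\Yps \mid Z{=}z] \le 1 - \dtv(\Ycs|Z{=}z, \Yps|Z{=}z). \]
Averaging over $z$ gives that the construct accuracy (Equation~\ref{eqn:accuracy}) is at most $1 - \frac{1}{2}\big(\dtv(\Ycs|Z{=}0, \Yps|Z{=}0) + \dtv(\Ycs|Z{=}1, \Yps|Z{=}1)\big)$, so it remains to lower-bound the sum of these two total variation distances.

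The key step is a double application of the triangle inequality for $\dtv$ along the chain $\Ycs|Z{=}0 \to \Yps|Z{=}0 \to \Yps|Z{=}1 \to \Ycs|Z{=}1$, which yields
\[ \dtv(\Ycs|Z{=}0, \Yps|Z{=}0) + \dtv(\Ycs|Z{=}1, \Yps|Z{=}1) \ge \dtv(\Ycs|Z{=}0, \Ycs|Z{=}1) - \dtv(\Yps|Z{=}0, \Yps|Z{=}1). \]
Now I would substitute the defining equality of the $\alpha$-Hybrid worldview (Equation~\ref{eqn:hybrid}) for the first term on the right and the inequality guaranteed by the $\alpha'$-disparity test (Equation~\ref{eqn:disparity}) for the second, obtaining the lower bound $(\alpha - \alpha') \cdot \dtv(\Yos|Z{=}0, \Yos|Z{=}1)$; here the hypothesis $\alpha > \alpha'$ is what guarantees this quantity is nonnegative, so the resulting accuracy bound is not vacuous. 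Plugging this back into the averaged bound gives that the construct accuracy is at most $1 - \frac{1}{2}(\alpha - \alpha') \cdot \dtv(\Yos|Z{=}0, \Yos|Z{=}1)$, as claimed.

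I do not anticipate a serious obstacle. Unlike Theorem~\ref{thm:notwae}, the statement makes no tightness claim, so no matching construction of an extremal $\Yps$ is needed, and every ingredient (Lemma~\ref{lem:dtvmin}, the metric property of $\dtv$) is already in hand. The only point requiring care is bookkeeping in the triangle-inequality chain: ensuring that the two ``endpoint'' distances $\dtv(\Ycs|Z{=}z, \Yps|Z{=}z)$ are precisely the ones produced by the per-group accuracy bound, that the middle term is exactly the output disparity $\dtv(\Yps|Z{=}0, \Yps|Z{=}1)$ controlled by the $\alpha'$-disparity test, and that symmetry of $\dtv$ is used to align $\dtv(\Yps|Z{=}1, \Ycs|Z{=}1)$ with $\dtv(\Ycs|Z{=}1, \Yps|Z{=}1)$. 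These are routine.
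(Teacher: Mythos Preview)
Your proposal is correct and follows essentially the same route as the paper's proof: the per-group bound from Lemma~\ref{lem:dtvmin}, averaging, the triangle inequality to pass from $\dtv(\Ycs|Z{=}z,\Yps|Z{=}z)$ to $\dtv(\Ycs|Z{=}0,\Ycs|Z{=}1)-\dtv(\Yps|Z{=}0,\Yps|Z{=}1)$, and then substituting Equations~\ref{eqn:hybrid} and~\ref{eqn:disparity}. The only cosmetic difference is that the paper phrases the argument in terms of construct \emph{inaccuracy} before converting back to accuracy at the end.
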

\begin{proof}
	By the reasoning in the proof of Theorem~\ref{thm:notwae}, we have for all $z \in \binset$,
	$ \Pr[\Ycs{=}\Yps \mid Z{=}z] \le 1 - \dtv(\Ycs|Z{=}z, \Yps|Z{=}z) $,
	which can be rewritten as
	$ \Pr[\Ycs{\neq}\Yps \mid Z{=}z] \ge \dtv(\Ycs|Z{=}z, \Yps|Z{=}z) $.
	
	Thus, the construct \emph{inaccuracy} of the model is
	\begin{align*}
	&\textstyle \frac{1}{2} \big(\Pr[\Ycs{\neq}\Yps \mid Z{=}0] + \Pr[\Ycs{\neq}\Yps \mid Z{=}1]\big) \\
	&\textstyle \ge \frac{1}{2} \big(\dtv(\Ycs|Z{=}0, \Yps|Z{=}0) + \dtv(\Ycs|Z{=}1, \Yps|Z{=}1)\big) \\
	&\textstyle \ge \frac{1}{2} \big(\dtv(\Ycs|Z{=}0, \Ycs|Z{=}1) - \dtv(\Yps|Z{=}0, \Yps|Z{=}1)\big) \\
	&\textstyle \ge \frac{1}{2} (\alpha - \alpha') \cdot \dtv(\Yos|Z{=}0, \Yos|Z{=}1),
	\end{align*}
	where the second inequality is an application of the triangle inequality and the third follows from the definitions of the $\alpha$-Hybrid worldview and the $\alpha'$-disparity test.
	
	Therefore, the construct accuracy, which is one minus the construct inaccuracy, is at most $1 - \frac{1}{2} (\alpha - \alpha') \cdot \dtv(\Yos|Z{=}0, \Yos|Z{=}1)$.
\end{proof}
\begin{theorem} \label{thm:bigalphatest}
	If the $\alpha$-Hybrid worldview holds, a model that passes the $\alpha'$-disparity test, with $\alpha < \alpha'$, can still have disparity amplification.
\end{theorem}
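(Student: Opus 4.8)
The plan is to follow the template of Theorem~\ref{thm:notwysiwyg} and exhibit a single joint distribution of $\Ycs$, $\Yos$, $\Yps$, and $Z$ that is consistent with the $\alpha$-Hybrid worldview and under which some model passes the $\alpha'$-disparity test while still exhibiting disparity amplification. I would take $\Ycs$, $\Yos$, and $\Yps$ all to be binary and, conditioned on each value of $Z$, mutually independent, so that only their conditional marginals matter and Equation~\ref{eqn:hybrid} reduces to a statement about those marginals. Concretely, set $\Pr[\Yos{=}1 \mid Z{=}0] = 1$ and $\Pr[\Yos{=}1 \mid Z{=}1] = 0$, so that $\dtv(\Yos|Z{=}0, \Yos|Z{=}1) = 1$; set $\Pr[\Ycs{=}1 \mid Z{=}0] = \alpha$ and $\Pr[\Ycs{=}1 \mid Z{=}1] = 0$, so that $\dtv(\Ycs|Z{=}0, \Ycs|Z{=}1) = \alpha$ and Equation~\ref{eqn:hybrid} holds; and let the model produce output with $\Pr[\Yps{=}1 \mid Z{=}0] = \alpha'$ and $\Pr[\Yps{=}1 \mid Z{=}1] = 0$, so that $\dtv(\Yps|Z{=}0, \Yps|Z{=}1) = \alpha'$. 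As in the proof of Theorem~\ref{thm:pp}, a legitimate (randomized) model realizing these conditional output probabilities exists, e.g.\ one whose output depends on $Z$.

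With the construction in hand, the verification is two short checks. First, $\dtv(\Yps|Z{=}0, \Yps|Z{=}1) = \alpha' = \alpha' \cdot \dtv(\Yos|Z{=}0, \Yos|Z{=}1)$, so Equation~\ref{eqn:disparity} holds (with equality) and the model passes the $\alpha'$-disparity test. Second, since $\alpha < \alpha'$ we have $\dtv(\Yps|Z{=}0, \Yps|Z{=}1) = \alpha' > \alpha = \dtv(\Ycs|Z{=}0, \Ycs|Z{=}1)$, which is exactly the inequality of Definition~\ref{def:dispamp}, so the model exhibits disparity amplification. (One could just as well pick any output disparity strictly between $\alpha$ and $\alpha'$; the boundary choice $\alpha'$ is simplest.)

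I do not expect any genuine obstacle here; the only points deserving a sentence of care are realizability and a degenerate case. Realizability is immediate because conditioning on $Z$ decouples the three spaces, and because for a binary variable the total variation distance between its two conditionals can take any value in $[0,1]$, so the choices $\alpha, \alpha' \in [0,1]$ never conflict with the probability axioms. The degenerate case is $\dtv(\Yos|Z{=}0, \Yos|Z{=}1) = 0$: then the $\alpha$-Hybrid worldview forces $\dtv(\Ycs|Z{=}0, \Ycs|Z{=}1) = 0$ and the $\alpha'$-disparity test forces $\dtv(\Yps|Z{=}0, \Yps|Z{=}1) = 0$, precluding disparity amplification — which is precisely why the construction takes this quantity to be positive (here, equal to $1$).
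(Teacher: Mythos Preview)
Your proposal is correct and follows essentially the same approach as the paper: take the output disparity to meet the $\alpha'$-disparity bound with equality while the observed disparity is positive, so that the $\alpha$-Hybrid worldview forces $\dtv(\Ycs|Z{=}0,\Ycs|Z{=}1)=\alpha\cdot\dtv(\Yos|Z{=}0,\Yos|Z{=}1)<\alpha'\cdot\dtv(\Yos|Z{=}0,\Yos|Z{=}1)=\dtv(\Yps|Z{=}0,\Yps|Z{=}1)$. The paper leaves this at the abstract level (``if equality holds here \dots\ whenever $\dtv(\Yos|Z{=}0,\Yos|Z{=}1)\neq 0$''), whereas you supply an explicit binary construction realizing these disparities; your added discussion of realizability and the degenerate zero-disparity case is a welcome bit of extra care.
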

\begin{proof}
	The $\alpha'$-disparity test ensures that 
	\[ \dtv(\Yps|Z{=}0, \Yps|Z{=}1) \le \alpha' \cdot \dtv(\Yos|Z{=}0, \Yos|Z{=}1), \]
	and if equality holds here, we have
	\[ \dtv(\Yps|Z{=}0, \Yps|Z{=}1) = \alpha' \cdot \dtv(\Yos|Z{=}0, \Yos|Z{=}1) > \alpha \cdot \dtv(\Yos|Z{=}0, \Yos|Z{=}1) \]
	whenever $\dtv(\Yos|Z{=}0, \Yos|Z{=}1) \neq 0$.
	By the $\alpha$-Hybrid worldview, the rightmost quantity equals $\dtv(\Ycs|Z{=}0, \Ycs|Z{=}1)$, making the above inequality exactly that of disparity amplification (see~\eqref{eqn:dispamp}).
\end{proof}

\section{A More General Notion of Disparity Amplification} \label{sec:general}
In this section, we present a more general definition of disparity amplification that is a broader discrimination criterion and is applicable to numerical $\Ycs$.
\ifunabridged \else
Due to space constraints, proofs of theorems in this section are given in Appendix~\ref{sec:generalproofs} of the supplementary material.

\fi
Definition~\ref{def:dispamp} allows an output disparity if there \emph{exists} an equally large disparity in $\Ycs$, but it does not explicitly reflect the fact that we care about \emph{how} the model came to exhibit the disparity.
The only reason why we allow the disparity is that $\Ycs$ is the right attribute to use.
Thus, if the model does not use $\Ycs$ at all, then there should be no output disparity.
More formally, we want that if $\Ycs \perp \Yps$, then $\Yps \perp Z$.

Definition~\ref{def:dispampcont} generalizes this requirement and, unlike Definition~\ref{def:dispamp}, is applicable for both categorical and numerical $\Ycs$ at the expense of limiting $\Yps$ to be binary.
The generalization deals with cases where $\Yps$ is not independent of $\Ycs$ by measuring how much $\Yps$ depends upon $\Ycs$.
For binary $\Yps$, this dependence is captured by the likelihood function $\ell(\ycs) = \Pr[\Yps{=}1 \mid \Ycs{=}\ycs]$, and we use the Lipschitz continuity of this function to measure the dependence.
\begin{definition}[Disparity Amplification, Stronger] \label{def:dispampcont}
	For $\Yps \in \binset$ and $\ell(\ycs) = \Pr[\Yps{=}1 \mid \Ycs{=}\ycs]$, let $\rhostar$ be the smallest nonnegative $\rho$ such that $\ell$ is $\rho$-Lipschitz continuous.%
	\footnote{Technically, $\rhostar$ should be the \emph{infimum} of all $\rho$ such that $\ell$ is $\rho$-Lipschitz continuous, but it is not difficult to show then that $\ell$ is in fact $\rhostar$-Lipschitz continuous.}
	Then, a model exhibits \emph{disparity amplification} if
	\begin{equation} \label{eqn:dispampcont}
	\dtv(\Yps|Z{=}0, \Yps|Z{=}1) > \rhostar \cdot \dem(\Ycs|Z{=}0, \Ycs|Z{=}1).
	\end{equation}
\end{definition}
$\rhostar$ characterizes how much impact $\Ycs$ can have on the output of the model.
If the impact is small, we can conclude that the model is not using $\Ycs$ much, so not much output disparity can be explained by $\Ycs$.
On the other hand, if a small change in $\Ycs$ can cause a large change in the probability distribution of $\Yps$, then even a large output disparity can possibly be due to a small inter-group difference in $\Ycs$.
In fact, the use of $\rhostar$ makes Definition~\ref{def:dispampcont} invariant to scaling in $\Ycs$.
If a numerical $\Ycs$ is increased by some factor, $\rhostar$ will decrease by the same factor, so the quantity on the right-hand side of \eqref{eqn:dispampcont} will not change.

We now show relationships between the new Definition~\ref{def:dispampcont} and the previous definition (Definition~\ref{def:dispamp}).
First, we show that the old definition combined with a reasonable distance metric implies the new definition.
The previous definition assumes that $\Ycs$ is categorical, and in this case a natural distance metric for its support $\Ycalcs$ is the indicator $d(u, v) = \mathds{1}(u \neq v)$.
With this distance metric, we can relate the total variation distance used in the right-hand side of \eqref{eqn:dispamp} with the earthmover distance used in \eqref{eqn:dispampcont}.
\begin{theorem} \label{thm:dispamptocont}
	Let the construct $\Ycs$ be categorical with support $\Ycalcs$, which has distance metric $d(u, v) = \mathds{1}(u \neq v)$.
	If a model has disparity amplification under Definition~\ref{def:dispamp}, the model has disparity amplification under Definition~\ref{def:dispampcont} as well.
\end{theorem}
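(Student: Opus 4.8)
The plan is to reduce the inequality in Definition~\ref{def:dispampcont} to the one in Definition~\ref{def:dispamp} by means of two elementary observations and then to chain them. Note first that Definition~\ref{def:dispampcont} presupposes $\Yps \in \binset$, so throughout the argument $\Yps$ is binary; this is the only regime in which the statement is meaningful.

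First I would show that, under the indicator metric $d(u, v) = \mathds{1}(u \neq v)$ on $\Ycalcs$, the earthmover distance coincides with the total variation distance, i.e., $\dem(\Ycs|Z{=}0, \Ycs|Z{=}1) = \dtv(\Ycs|Z{=}0, \Ycs|Z{=}1)$. This is the standard coupling characterization of $\dtv$: in the discrete analogue of the earthmover distance, a transportation plan $\gamma$ is a joint distribution with the two prescribed marginals, and with the $0$--$1$ cost its total cost is exactly $\Pr_\gamma[U \neq V]$, the probability that the two coordinates disagree. Minimizing this over all couplings yields $\dtv$, with a maximal coupling attaining the minimum; I can invoke the same fact from \cite[p.\ 19]{lindvall2002lectures} already used in the proof of Theorem~\ref{thm:notwae}.

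Next I would bound $\rhostar \le 1$. Since $\ell(\ycs) = \Pr[\Yps{=}1 \mid \Ycs{=}\ycs]$ takes values in $[0, 1]$, for any $u, v \in \Ycalcs$ we have $|\ell(u) - \ell(v)| \le 1$; when $u = v$ both sides of Equation~\ref{eqn:lipschitz} vanish, and when $u \neq v$ we have $d(u, v) = 1$, so $|\ell(u) - \ell(v)| \le 1 = d(u, v)$ in every case. Hence $\ell$ is $1$-Lipschitz continuous, and $\rhostar$, being the smallest such constant, is at most $1$. Combining the two observations with the hypothesis that the model has disparity amplification under Definition~\ref{def:dispamp}, I would conclude
\[ \rhostar \cdot \dem(\Ycs|Z{=}0, \Ycs|Z{=}1) = \rhostar \cdot \dtv(\Ycs|Z{=}0, \Ycs|Z{=}1) \le \dtv(\Ycs|Z{=}0, \Ycs|Z{=}1) < \dtv(\Yps|Z{=}0, \Yps|Z{=}1), \]
which is precisely Equation~\ref{eqn:dispampcont}.

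I do not expect a genuine obstacle here: the only step requiring any care is the identification $\dem = \dtv$ for the $0$--$1$ metric, i.e., that an optimal transportation plan is a maximal coupling, and everything else is a one-line estimate. I would also add a remark that the implication is strict, i.e., Definition~\ref{def:dispampcont} is the stronger criterion, since a model can have $\rhostar$ far below $1$ while still exhibiting output disparity; this complements the separate motivation given for Definition~\ref{def:dispampcont} earlier in this section.
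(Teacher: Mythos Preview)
Your proof is correct and follows essentially the same route as the paper's: bound $\rhostar \le 1$ from $\ell \in [0,1]$, relate $\dem$ to $\dtv$ under the indicator metric, and chain with the hypothesis. The only difference is that you establish the equality $\dem = \dtv$ via the coupling characterization, whereas the paper cites \cite{gibbs2002choosing} for the weaker inequality $\dem \le (\max_{u,v} d(u,v)) \cdot \dtv = \dtv$, which is all that is needed.
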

\ifunabridged
\begin{proof}
	We proceed by showing that $\rhostar \cdot \dem(\Ycs|Z{=}0, \Ycs|Z{=}1) \le \dtv(\Ycs|Z{=}0, \Ycs|Z{=}1)$.
	
	Since the likelihood function $\ell$ in Definition~\ref{def:dispampcont} is always between 0 and 1, we have $|\ell(u) - \ell(v)| \le 1 = d(u, v)$ when $u \neq v$, so $\ell$ is 1-Lipschitz continuous.
	Therefore $\rhostar \le 1$, and it suffices to show that $\dem(\Ycs|Z{=}0, \Ycs|Z{=}1) \le \dtv(\Ycs|Z{=}0, \Ycs|Z{=}1)$.
	
	By \citep[Theorem 4]{gibbs2002choosing}, we get
	\begin{align*}
	\dem(\Ycs|Z{=}0, \Ycs|Z{=}1)
	&\le \Big(\max_{u,v \in \Ycalcs} d(u, v)\Big) \cdot \dtv(\Ycs|Z{=}0, \Ycs|Z{=}1) \\
	&= \dtv(\Ycs|Z{=}0, \Ycs|Z{=}1),
	\end{align*}
	so we are done.
\end{proof}
\else
The proof relies upon a theorem using coupling~\citep[Theorem 4]{gibbs2002choosing}.
\fi

Second, we show that Theorems~\ref{thm:dispampwae} and \ref{thm:dispampwysiwyg} still hold under the refined definition of disparity amplification.
Since the definitions of optimality and the empirical tests have not changed, we focus strictly on the nondiscrimination portions of the theorems.
\begin{theorem} \label{thm:dispampcontwae}
	A model that passes the demographic parity test does not have disparity amplification under Definition~\ref{def:dispampcont}.
\end{theorem}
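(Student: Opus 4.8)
The plan is to mirror the first half of the proof of Theorem~\ref{thm:dispampwae}, which is essentially immediate. First I would observe that if a model passes the demographic parity test, then by Definition~\ref{def:demparity} we have $\Pr[\Yps{=}\yps \mid Z{=}0] = \Pr[\Yps{=}\yps \mid Z{=}1]$ for every value $\yps$, and hence the left-hand side of Equation~\ref{eqn:dispampcont}, namely $\dtv(\Yps|Z{=}0, \Yps|Z{=}1)$, equals $0$.

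Next I would check that the right-hand side $\rhostar \cdot \dem(\Ycs|Z{=}0, \Ycs|Z{=}1)$ is nonnegative. This has two pieces: $\rhostar$ is nonnegative because it is defined as the smallest \emph{nonnegative} $\rho$ for which the likelihood function $\ell$ is $\rho$-Lipschitz continuous, and $\dem(\Ycs|Z{=}0, \Ycs|Z{=}1)$ is nonnegative because it is an infimum of integrals of nonnegative integrands (the transport cost $\gamma(u,v) \, d(u,v)$ is nonnegative since $\gamma$ is a density and $d$ is a metric). Consequently the strict inequality in Equation~\ref{eqn:dispampcont} reads $0 > \rhostar \cdot \dem(\Ycs|Z{=}0, \Ycs|Z{=}1)$, which cannot hold. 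Therefore the model does not exhibit disparity amplification under Definition~\ref{def:dispampcont}.

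There is no substantive obstacle in this argument; unlike Theorem~\ref{thm:dispamptocont}, we do not need the comparison of the earthmover distance with total variation distance, nor any structural fact about $\ell$. The only point worth a moment's care is the degenerate case where $\dem(\Ycs|Z{=}0, \Ycs|Z{=}1)$ or $\rhostar$ is infinite (e.g., an unbounded support with an unbounded metric); but even then the product is a nonnegative extended real, so $0 > (\cdots)$ still fails and no case analysis is needed. Accordingly I would present this as a three-line proof parallel to the demographic-parity half of Theorem~\ref{thm:dispampwae}.
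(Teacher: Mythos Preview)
Your proposal is correct and matches the paper's approach exactly: the paper omits the proof, noting only that it is very similar to that of Theorem~\ref{thm:dispampwae}, and your argument (demographic parity forces the left-hand side to be zero, while the right-hand side is nonnegative) is precisely that similarity.
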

\ifunabridged
The proof of Theorem~\ref{thm:dispampcontwae} is very similar to that of Theorem~\ref{thm:dispampwae} and will thus be omitted.
\else
The proof of Theorem~\ref{thm:dispampcontwae} is very similar to that of Theorem~\ref{thm:dispampwae}.
\fi
\begin{theorem} \label{thm:dispampcontwysiwyg}
	If the WYSIWYG worldview holds, then a model that passes the equalized odds test does not have disparity amplification under Definition~\ref{def:dispampcont}.
\end{theorem}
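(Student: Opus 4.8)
The plan is to mirror the proof of Theorem~\ref{thm:dispampwysiwyg}, replacing the discrete manipulation of the construct disparity with an argument based directly on the coupling definition of $\dem$. First I would invoke the WYSIWYG worldview, $\Ycs = \Yos$, to rewrite the equalized odds condition: substituting $\Yos$ by $\Ycs$ in Definition~\ref{def:eqodds} gives $\Pr[\Yps{=}1 \mid \Ycs{=}\ycs, Z{=}0] = \Pr[\Yps{=}1 \mid \Ycs{=}\ycs, Z{=}1]$, and since this common value cannot depend on $Z$, it equals $\ell(\ycs) = \Pr[\Yps{=}1 \mid \Ycs{=}\ycs]$. By the law of total probability, $\Pr[\Yps{=}1 \mid Z{=}z] = \mathbb{E}\big[\ell(\Ycs) \mid Z{=}z\big]$ for each $z \in \binset$. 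Since $\Yps$ is binary, the left-hand side of Equation~\ref{eqn:dispampcont} is therefore $\dtv(\Yps|Z{=}0, \Yps|Z{=}1) = \big|\mathbb{E}[\ell(\Ycs) \mid Z{=}0] - \mathbb{E}[\ell(\Ycs) \mid Z{=}1]\big|$.

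Next I would bound this quantity by $\rhostar \cdot \dem(\Ycs|Z{=}0, \Ycs|Z{=}1)$. Fix any joint density $\gamma \in \Gamma$ coupling $(\Ycs \mid Z{=}0)$ and $(\Ycs \mid Z{=}1)$, as in the definition of $\dem$. Using the marginal constraints on $\gamma$, both expectations can be written as integrals against $\gamma$, so $\mathbb{E}[\ell(\Ycs) \mid Z{=}0] - \mathbb{E}[\ell(\Ycs) \mid Z{=}1] = \int_{\Ycal} \int_{\Ycal} \gamma(u, v) \, \big(\ell(u) - \ell(v)\big) \, du \, dv$. Applying the triangle inequality for integrals and then the $\rhostar$-Lipschitz bound $|\ell(u) - \ell(v)| \le \rhostar \cdot d(u, v)$ yields $\big|\mathbb{E}[\ell(\Ycs) \mid Z{=}0] - \mathbb{E}[\ell(\Ycs) \mid Z{=}1]\big| \le \rhostar \int_{\Ycal} \int_{\Ycal} \gamma(u, v) \, d(u, v) \, du \, dv$. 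Taking the infimum over $\gamma \in \Gamma$ gives $\dtv(\Yps|Z{=}0, \Yps|Z{=}1) \le \rhostar \cdot \dem(\Ycs|Z{=}0, \Ycs|Z{=}1)$, which is exactly the negation of Equation~\ref{eqn:dispampcont}; hence the model has no disparity amplification. Equivalently, one could cite the Kantorovich--Rubinstein duality, noting that $\ell / \rhostar$ is $1$-Lipschitz when $\rhostar > 0$, but the coupling argument above has the advantage of handling the degenerate case $\rhostar = 0$ uniformly, since then $|\ell(u) - \ell(v)| \le 0$ for all $u, v$.

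I expect the only genuine subtlety to be the measure-theoretic bookkeeping in the continuous case --- justifying that $\mathbb{E}[\ell(\Ycs) \mid Z{=}z]$ equals the integral of $\ell$ against the density of $(\Ycs \mid Z{=}z)$, and then rewriting it against $\gamma$ using the marginal conditions --- together with checking finiteness of the integrals, which is immediate since $\ell$ takes values in $[0, 1]$ and is $\rhostar$-Lipschitz. When $\Ycs$ is discrete the same argument goes through verbatim with sums in place of integrals.
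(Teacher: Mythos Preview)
Your proof is correct and reaches the same conclusion, but it runs the argument on the primal side of optimal transport rather than the dual side used in the paper. The paper invokes Kantorovich duality (citing Villani), taking $\phi = \psi = \ell/\rhostar$ as admissible dual potentials to obtain $\rhostar \cdot \dem(\Ycs|Z{=}0,\Ycs|Z{=}1) \ge \int \ell\,p_1 - \int \ell\,p_0$, and then repeats with $-\ell/\rhostar$ to get the absolute value. You instead fix an arbitrary coupling $\gamma$, write the difference of expectations as $\int\!\!\int \gamma(u,v)\bigl(\ell(u)-\ell(v)\bigr)\,du\,dv$, and bound it pointwise via Lipschitz continuity before taking the infimum over couplings. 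The two arguments are of course equivalent through Kantorovich--Rubinstein duality, and you note as much. Your route has two mild advantages: it stays entirely within the definitions already given in the paper (no external duality theorem needed), and, as you observe, it handles $\rhostar = 0$ without a separate case since you never divide by $\rhostar$. The paper's proof implicitly assumes $\rhostar > 0$ when setting $\phi = \ell/\rhostar$ and does not treat the degenerate case.
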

\ifunabridged
\begin{proof}
	We present the proof for the case where $\Ycs$ is continuous, but the proof for the discrete case is very similar.
	Let $p_0$ and $p_1$ be the probability density functions of $\Ycs|Z{=}0$ and $\Ycs|Z{=}1$, respectively.
	By Kantorovich duality~\citep[Equation 5.4]{villani2008optimal}, we have
	\begin{multline} \label{eqn:duality}
	\dem(\Ycs|Z{=}0, \Ycs|Z{=}1) \\
	\ge \int_{\Ycalcs} \phi(v) \, p_1(v) \, dv - \int_{\Ycalcs} \psi(u) \, p_0(u) \, du
	\end{multline}
	for all $\phi$ and $\psi$ such that $\phi(v) - \psi(u) \le d(u,v)$ for all $u, v \in \Ycalcs$.
	We set $\phi(v) = \psi(v) = \ell(v) / \rhostar$, where $\ell$ and $\rhostar$ are defined as in Definition~\ref{def:dispampcont}.
	Then, $\phi(v) - \psi(u) = (\ell(v) - \ell(u))/\rhostar \le d(u, v)$ by Lipschitz continuity.
	Thus, \eqref{eqn:duality} applies and implies that
	\begin{multline} \label{eqn:dualityappl}
	\rhostar \cdot \dem(\Ycs|Z{=}0, \Ycs|Z{=}1) \\
	\ge \int_{\Ycalcs} \ell(v) \, p_1(v) \, dv - \int_{\Ycalcs} \ell(u) \, p_0(u) \, du.
	\end{multline}
	By the WYSIWYG worldview and equalized odds, we have $\ell(y) = \Pr[\Yps{=}1 \mid \Ycs{=}y] = \Pr[\Yps{=}1 \mid \Ycs{=}y, Z{=}0] = \Pr[\Yps{=}1 \mid \Ycs{=}y, Z{=}1]$.
	Therefore, we can use the law of total probability to rewrite the first term on the right-hand side of \eqref{eqn:dualityappl} as $\Pr[\Yps{=}1 \mid Z{=}1]$, and similarly the second term becomes $\Pr[\Yps{=}1 \mid Z{=}0]$.
	
	If we let $\phi(v) = \psi(v) = -\ell(v) / \rhostar$ in \eqref{eqn:duality} instead, we get $\rhostar \cdot \dem(\Ycs|Z{=}0, \Ycs|Z{=}1) \ge \Pr[\Yps{=}1 \mid Z{=}0] - \Pr[\Yps{=}1 \mid Z{=}1]$.
	Finally, combining this inequality with the previous one gives us
	\begin{align*}
	\rhostar \cdot \dem(\Ycs|Z{=}0, \Ycs|Z{=}1)
	&\ge \Big|\Pr[\Yps{=}1 \mid Z{=}0] - \Pr[\Yps{=}1 \mid Z{=}1]\Big| \\
	&= \dtv(\Yps|Z{=}0, \Yps|Z{=}1),
	\end{align*}
	which is what we want.
\end{proof}
\else
This proof uses Kantorovich duality~\citep[Equation 5.4]{villani2008optimal}.
\fi

We now discuss the tightness of the above result.
In the extreme case where $\ell$ is a step function over real-valued $\ycs$, $\rhostar$ is infinite, so we trivially have a lack of disparity amplification under Definition~\ref{def:dispampcont}.
Thus, to receive meaningful fairness guarantees from Theorem~\ref{thm:dispampcontwysiwyg}, we must make sure that $\rhostar$ is not too large.
One way to achieve this is to apply the function $\ell$ to the construct space and reason about the transformed construct space.
If any transformation of the construct space results in a finding of disparity amplification under Definition~\ref{def:dispampcont}, then it is evidence that there could be a problem with the model with respect to discrimination.
Let $\ytildecs = \ell(\ycs)$ be a value in the transformed construct space, and $\tilde{\ell}$ denote the likelihood function on this space.
Then,
\[ \tilde{\ell}(\ytildecs) = \Pr[\Yps{=}1 \mid \Ytildecs{=}\ytildecs] = \Pr[\Yps{=}1 \mid \Ycs{=}\ycs] = \ell(\ycs) = \ytildecs, \]
so the transformation ensures that $\rho^*_{\!\tilde{\ell}} = 1$.

\paragraph{Connection to the $\alpha$-Disparity Test}
When $\Ycs$ and $\Yos$ are numerical, a natural extension of the $\alpha$-disparity test (Definition~\ref{def:disparity}) is
\begin{equation} \label{eqn:disparitycont}
\dtv(\Yps|Z{=}0, \Yps|Z{=}1) \le \rhostar \cdot \alpha \cdot \dem(\Yos|Z{=}0, \Yos|Z{=}1).
\end{equation}
For this to work, Worldview~\ref{def:hybrid} would have to change to use the earthmover distance rather than the total variation distance.
Since the earthmover distance is defined over a distance metric, the parameter $\alpha$ is not very meaningful unless $\Ycs$ and $\Yos$ have the same scale.
As a result, here we consider the case where $\Ycs$ and $\Yos$ are defined over the same metric space $(\Ycal, d)$.

Unfortunately, \eqref{eqn:disparitycont} is still not an empirical test because $\rhostar$ is defined in terms of $\Ycs$.
As tempting as redefining $\rhostar$ in terms of $\Yos$ is, $\Ycs$ and $\Yos$ can have vastly different likelihood functions despite having the same disparity, so this new empirical test will not guarantee the lack of disparity amplification under Definition~\ref{def:dispampcont}.
We leave as future work the discovery of an empirical test for numerical $\Ycs$ and $\Yos$ that corresponds to the $\alpha$-Hybrid worldview.

\section{Conclusion} \label{sec:conclusion}
We showed that demographic parity and equalized odds are related through our construct-based discrimination criterion of disparity amplification, arguing that the difference between the two empirical tests boils down to one's worldview.
In addition, we proved that calibration is not robust to post-processing and that predictive parity allows a model with an arbitrarily large output disparity regardless of the worldview and the observed base rates.

Our work differs from much of the prior work in that we consider the construct as separate from the observed data.
In particular, we interpreted the existing fairness definitions as acting on the observed data, whereas the discrimination criterion was viewed as a property of the construct.
This bifurcation allowed us to handle the following issues simultaneously:
(a) prohibitions against disparate impact have exceptions such as a business necessity, but
(b) due to past discrimination, the observed data can be biased in an unjustified way.
It is the second of these points that motivates our use of worldviews to characterize how biased the observed data is.

To illustrate how this might work in practice, let us revisit the examples in Section~\ref{sec:notation}.
In Example~1, there are reasons to believe that the observed recidivism rate is a racially biased measurement of the actual reoffense rate.
In Example~2, for various socioeconomic reasons, some protected groups may have disproportionately many people who take longer than six years to graduate but are eventually considered successful in the university.
The $\alpha$-Hybrid worldview can characterize these real-world scenarios, and the value of $\alpha$ reflects one's beliefs about how much more biased the observed data is than the construct.
Then, a practitioner can apply the $\alpha$-disparity test as a substitute for demographic parity or equalized odds, with the value of $\alpha$ determined through social research and public dialogue.

\ifunabridged
\section*{Acknowledgments}
This material is based upon work supported by the National Science Foundation under Grant No.~1704985.
Any opinions, findings, and conclusions or recommendations expressed in this material are those of the authors and do not necessarily reflect the views of the National Science Foundation.
\fi
	
	\bibliographystyle{ACM-Reference-Format}
	\bibliography{biblio}
\end{document}